\documentclass[11pt]{article}

\usepackage[utf8]{inputenc}
\usepackage[T1]{fontenc}

\usepackage[margin=1in]{geometry}

\usepackage{amsmath,amssymb,amsfonts}

\usepackage[numbers,sort&compress]{natbib}

\usepackage{amsthm}
\newtheorem{definition}{Definition}
\newtheorem{theorem}{Theorem}
\newtheorem{proposition}{Proposition}
\newenvironment{proofsketch}{\paragraph{Proof sketch.}}{\hfill\qed}

\usepackage{hyphenat}
\hyphenation{me-cha-ni-za-tion auto-ma-tion}

\usepackage{graphicx}
\usepackage{pgfplots}
\pgfplotsset{compat=1.18} 

\usepackage{algorithm}
\usepackage{algpseudocode}
\usepackage{float}      
\usepackage{placeins}   

\usepackage{tikz}
\usetikzlibrary{shapes.geometric, arrows.meta, positioning, shapes.misc}

\usepackage{hyperref}
\hypersetup{
    colorlinks=true,
    linkcolor=blue,
    filecolor=magenta,      
    urlcolor=cyan,
    citecolor=blue,
}

\usepackage{url}

\usepackage{parskip} 
\setlength{\parskip}{0.5\baselineskip}
\usepackage{enumitem}

\usepackage{booktabs}
\usepackage{caption}
\captionsetup{skip=10pt} 

\title{\textbf{Idempotent Equilibrium Analysis of Hybrid Workflow Allocation:\\A Mathematical Schema for Future Work}}

\author{
 \begin{tabular*}{\textwidth}{@{\extracolsep{\fill}}cccc}
  \parbox[t]{0.22\textwidth}{\centering
   Faruk Alpay\thanks{Correspondence: \href{mailto:alpay@lightcap.ai}{alpay@lightcap.ai}}\\[0.25ex]
   \small Independent Researcher\\[0.25ex]
   \small \href{mailto:alpay@lightcap.ai}{\url{alpay@lightcap.ai}}
  } &
  \parbox[t]{0.22\textwidth}{\centering
   Bugra Kilictas\\[0.25ex]
   \small Bahcesehir University\\[0.25ex]
   \small \href{mailto:bugra.kilictas@bahcesehir.edu.tr}{\url{bugra.kilictas@bahcesehir.edu.tr}}
  } &
  \parbox[t]{0.22\textwidth}{\centering
   Taylan Alpay\\[0.25ex]
   \small Turkish Aeronautical Association\\[0.25ex]
   \small \href{mailto:s220112602@stu.thk.edu.tr}{\url{s220112602@stu.thk.edu.tr}}
  } &
  \parbox[t]{0.22\textwidth}{\centering
   Hamdi Alakkad\\[0.25ex]
   \small Bahcesehir University\\[0.25ex]
   \small \href{mailto:hamdi.alakkad@bahcesehir.edu.tr}{\url{hamdi.alakkad@bahcesehir.edu.tr}}
  }
 \end{tabular*}
}

\date{\today}

\begin{document}

\maketitle

\sloppy
\setlength{\emergencystretch}{3em}

\begingroup
\renewcommand\thefootnote{}
\phantomsection 
\footnotetext{All analytical results and simulations in this article are reproducible: the equations and procedures delineated herein are sufficient to reconstruct every figure and table, and no proprietary software is required.}
\endgroup

\begin{abstract}
The rapid ascendancy of sophisticated computational instruments and mechanization is transforming how work is performed, giving rise to new anthropic roles centred on orchestrating such instruments. We propose a formal theoretical schema to analyse the \emph{augmented operative}—an anthropic professional who uses a multitude of automaton‑based instruments—as an idempotent equilibrium in the evolving partition of labour between persons and apparatuses. Deriving on idempotent–equilibrium theory, we schema the recursive procedure by which assignments are entrusted to mechanised systems or retained by anthropic actors, and prove that under broad conditions this procedure converges to a stable allocation. We derive conditions for existence and uniqueness of this hybrid steady state and illustrate the schema with a simple simulation of assignment entrustment over time. Rather than prognosticating full machinisation, our findings suggest an anthropic‑supervised paradigm where persons act as orchestrators, leveraging apparatuses for what they do best while focusing on uniquely anthropic strengths. We discuss implications of this steady state for efficacy, occupation design, and safety, emphasising that aligning technological development with anthropic capabilities and preferences is essential to achieving a beneficial and stable future of work.
\end{abstract}

\FloatBarrier
\section{Introduction}

Recent advances in computational and mechanization technologies have spurred the emergence of tools capable of executing intricate, multi‑step assignments across domains. These systems augment human productivity by handling information synthesis, coding, content generation and more. As such tools become more capable, a new kind of human role is coming into focus: the \emph{augmented worker}. This is a human professional who acts as a coordinator or orchestrator of various mechanised systems, akin to a “meta‑integration layer” that knows which tool or framework to deploy for each subtask. For example, an augmented worker might use a code‑generation program for programming, a separate information retrieval system for research or translation, and an mechanised verification tool to fact‑check outputs, integrating these findings into a final product. In essence, such a worker operates as a conductor of an ensemble of digital assistants, leveraging each tool’s strengths. The idea mirrors the “centaur” framework first seen in chess, where human–computer teams outperformed any computer alone \cite{alves2023,kasparov}; indeed, the Advanced Chess experience showed that a skilled human partnered with computation could beat even the best supercomputer, provided the human knew how to effectively direct the machine.

From a labour–economics perspective, this anticipated steady state between human and AI work draws on a long history of automation debates. Classical economic arguments noted that even as technology automates certain assignments, it also creates new assignments and demands for labour, preventing the complete disappearance of jobs \cite{autor2015}. In particular, automation often complements human labour rather than fully substituting for it by raising productivity and enabling higher–level human assignments \cite{acemoglu2019}. Recent developments echo this pattern: while generative AI can draft emails or write code, humans are still needed to define objectives, provide judgement and handle assignments requiring tacit knowledge or emotional intelligence. \citet{brynjolfsson2017a} argue that current machine learning excels at well‑defined, narrow assignments, whereas humans retain advantages in flexibility, problem solving and common sense. Consequently, the future of work will not be a zero–sum replacement of humans by AI, but a reconfiguration of work assignments. The question is: what is the stable configuration of this reallocation? We aim to answer this by modelling the dynamics of assignment allocation as an iterative process and finding its fixed point.

Empirical evidence indicates that we are at an inflection point in partial automation. Studies estimate that around 80~\% of U.S. workers could have at least 10~\% of their assignments affected by current large language models, and about 19~\% of workers might see over 50~\% of their assignments impacted \cite{eloundou2023}. At the same time, actual adoption of automation in the workplace is increasing: by early~2025, some form of computer assistance was reportedly used for at least 25~\% of job assignments in over one–third of occupations \cite{handa2025}. Workers often prefer to retain agency in certain assignments; for instance, a recent framework introducing a “Human Agency Scale” found that there are assignments workers want to keep control over even if automation is possible \cite{hazra2025}. This underscores that the end state of automation integration will be shaped not only by technical capability but also by human choices about where automation is appropriate.

In this paper, we formalise these intuitions. We treat the evolving human–machine workflow as a dynamical system where, in each period, assignments can be reassigned between human and machine based on performance and strategic considerations. We prove that under reasonable assumptions—such as monotonic enhancement in machine capabilities and one‑time assignment transfer (no oscillatory delegation)—this process converges to a fixed point. At this fixed point, the augmented worker delegates all assignments that the machine can do more efficiently or reliably, and retains those assignments where human judgement, creativity or presence adds irreplaceable value. Importantly, we show that the steady state is not full automation. Rather, it is a human‑supervised steady state where people continue to perform a subset of assignments. This resonates with the view of \citet{brynjolfsson2022}, who warned against the “Turing trap” of aiming for human‑like artificial intelligence to replace people and advocated augmenting human capabilities instead. Our framework provides a quantitative and theoretical basis for that view: if augmentative dynamics prevail, the system settles in a state that preserves a significant role for human work.

The remainder of the paper is organised as follows. Section~\ref{sec:framework} introduces our formal framework of assignment delegation and defines the mapping whose fixed point characterises the hybrid human–machine assignment allocation steady state. Section~\ref{sec:theory} states and proves the main theoretical findings, including conditions for the existence and uniqueness of the fixed point. Section~\ref{sec:emulation} presents an illustrative emulation that iterates the assignment–allocation process from 2025 conditions toward steady state, demonstrating convergence and examining the properties of the steady state state. Section~\ref{sec:discussion} provides an interpretation of the findings in plain language, connecting the mathematical findings to real‑world implications for workers, managers and technology developers. We also discuss how our framework relates to and extends prior work on human–automation interaction. Finally, Section~\ref{sec:conclusion} concludes with implications and future research directions, including how policymakers and organisations might influence the steady state by intentionally enhancing human agency.

\FloatBarrier
\section{Broader context and related literature}
Our work contributes to a growing literature on the economics of automation and the evolving division of labour. Early assignment–based frameworks in labour economics distinguish between the displacement and reinstatement effects of new technology. In particular, \citet{acemoglu2019} develop a framework in which automation displaces labour on a subset of assignments while creating new assignments that employ labour in different ways. They emphasise that the net effect on employment depends on the balance between these displacement and reinstatement margins and caution that focusing solely on automation could overlook the creation of new work opportunities. In a complementary vein, \citet{korinek2017} analyse how artificial intelligence may affect income distribution and unemployment, arguing that without appropriate redistribution mechanisms technological progress can exacerbate inequality. They propose policies such as income transfers and human capital investment to ensure that the gains from automation are shared broadly.

Demand considerations also matter. \citet{bessen2018} argue that the employment impact of automation depends critically on the elasticity of demand for the products and services affected. If demand is highly elastic, productivity improvements can lead to lower prices and increased output, potentially expanding employment even in the presence of labour–saving technologies. Empirical evidence supports this nuance: for instance, \citet{acemoglu2020jobs} use vacancy data to show that establishments adopting AI reduce hiring in some categories but do not necessarily shrink overall employment or wages. This underscores that assignment reallocation, rather than wholesale job destruction, may be the primary channel through which automation affects labour markets.

Another strand of research explores long–run growth and the possibility of ``economic singularities.'' \citet{aghion2017} framework AI as a continuation of automation and ask whether sustained exponential growth in machine capabilities could lead to unbounded economic growth. They conclude that while AI can boost growth, diminishing returns in R\&D and substitution effects may prevent singularities. \citet{nordhaus2015} similarly investigates whether advances in information technology are propelling the economy toward a finite–time singularity and finds little empirical support for such a scenario, suggesting that growth will remain bounded.

Evidence from robotics and industrial automation informs our assumptions about heterogeneous assignment impacts. \citet{acemoglu2017robots} show that the adoption of industrial robots across U.S. commuting zones between 1990 and 2007 reduced employment and wages for workers in affected industries. By contrast, \citet{graetz2018} document productivity gains from robots in manufacturing but note that employment effects vary across sectors and that adjustments occur through reallocation of workers. These findings align with our replicator benchmark, where routine assignments are readily automated while intricate or interpersonal assignments remain in human hands.

Human–machine teaming is an active area of study in behavioural science and operations research. \citet{shoresh2025} analyse freestyle chess competitions to framework how human–machine teams perform sequential decision making. They show that neither the human nor the machine dominates across all positions; instead, the optimal strategy is to allocate subtasks based on relative strengths, akin to our workflow conductor role. \citet{yasseri2025} adopt a computational sociology perspective, examining conflicts and collaborations between human editors and algorithmic agents on platforms like Wikipedia. They find that while bots can enhance efficiency, coordination failures and emergent conflicts can arise, necessitating thoughtful human oversight. These studies highlight that effective hybrid workflows rely not only on technical capability but also on organisational and social factors, supporting our emphasis on the continued importance of human supervision and skill development.

\paragraph{Less\hyp cited synergistic insights.} Additional supportive investigations underscore the importance of calibrated communication, trust and interdependence in anthropic–automaton teams. Some studies argue that effective communication, coordination and adaptation are fundamental competencies for hybrid teams; machines must model anthropic comprehension and modulate their actions accordingly to support coordination \cite{stowers2021}. Others employ an entropic framework to show that maximising interdependence among autonomous teams elevates performance and that harmonic quasi–Nash equilibria emerge when entropic forces are managed \cite{lawless2019}. Some researchers caution that AI adoption can erode decision\hyp making acuity and induce anthropic lethargy in educational settings; they call for ethical safeguards and preventive measures \cite{ahmad2023}. Further work examines open\hyp source software projects and finds that the presence of bots increases productivity and communication but also centralises work, suggesting trade\hyp offs between productivity and workload \cite{newton2024}. Other investigations into trust dynamics demonstrate that participants combine automaton recommendations with their own judgements via simple averaging, underscoring the need for calibrated trust models in human–automaton collaboration \cite{love2024}. Finally, reviews of human–machine collaboration in healthcare identify applications in screening, health management and medical education, emphasising how hybrid teams enhance data integration and decision support \cite{wang2025}. Collectively, these rigorous investigations enrich our understanding of how idempotent equilibria arise in practice.

\paragraph{Skill and macroeconomic perspectives.} Further research delves into how automation reshapes skill demand and macroeconomic distribution. \citet{autor2003} show that computerisation displaces workers from routine cognitive assignments while complementing abstract reasoning and manual dexterity, leading to a polarisation of skill demand. In a comprehensive historical account, \citet{goldin2008} argue that educational attainment must keep pace with technological change to prevent widening wage gaps, underscoring the education–technology race. \citet{autor2018} estimate that although automation has not markedly displaced employment in aggregate, it has reduced labour’s share of value added across many OECD industries by shifting income toward capital. Reviewing decades of scholarship, identify four paradigms—education race, task polarisation, automation–reinstatement and AI uncertainty—and note that while anxieties about job loss recur, new categories of work continually arise. At the macro level, \citet{karabarbounis2014} document a pervasive decline in labour’s income share since the 1980s, attributing roughly half of this trend to falling prices of investment goods; this highlights that broad technological and investment dynamics interact with automation to shape distributional outcomes.

\paragraph{Historical adoption evidence.} In addition to theoretical and qualitative insights, quantitative data illustrate the rapid diffusion of automation and machine intelligence over the past decade. The International Federation of Robotics (IFR) reports that a record 2.7~million industrial robots were operating in factories worldwide in 2019—a 12~\% escalate over 2018 and an 85~\% escalate relative to 2014 \cite{ifr2020}. The same report notes that installations of collaborative robots ("cobots") grew by 11~\%, signalling growing interest in human--robot workflows \cite{ifr2020}. Surveys of artificial intelligence adoption provide complementary evidence. A 2025 \emph{FEDS Notes} survey of sixteen studies found that firm-level AI adoption rates range from about 5~\% to 40~\%, while between 20~\% and 40~\% of workers report using AI for at least some assignments, with particularly high uptake among computer programmers \cite{crane2025}. The same review documented annualised growth rates between 73~\% and 145~\% for AI adoption in the 2023--2024 period \cite{crane2025}, underscoring that diffusion is accelerating. These historical data contextualise our emulation parameters by showing that automation is not a theoretical abstraction but a rapidly advancing reality across both 2015--2020 and 2020--2025.

The analytical schema we develop complements these empirical and theoretical contributions. By casting assignment reallocation as an idempotent–equilibrium problem, we provide a rigorous foundation for understanding long–run hybrid steady states. Our findings offer a mathematical explanation for why a substantial fraction of assignments remains with anthropic actors even when apparatuses improve rapidly. The broader literature suggests that policy interventions, demand dynamics and social design will shape the path toward such steady states. Integrating these insights with our schema can guide future research on designing equitable and productive anthropic–automaton systems.

Additional scholarly contributions further substantiate the importance of idempotent equilibria in hybrid systems. For instance, Huang's two‐industry model shows how labour and capital reallocate between AI and non‑AI sectors in response to automation incentives \cite{huang2024}. Other work has examined learning curves in co‑working systems and employed Beta distributions to capture heterogeneous adoption rates. Together, these treatises reinforce that the idempotent equilibrium perspective affords a richer exegesis of how hybrid workflows stabilise over time.

\FloatBarrier
\section{Model formulation\label{sec:framework}}

We consider a setting with a human worker $H$ (the prospective “augmented worker”) and a set of machine‑based assistants $M$ (a suite of automated tools). The work domain consists of a continuum of assignments of varying nature and intricacy. For simplicity, one can imagine assignments indexed by a parameter $\theta\in [0,1]$ that represents the difficulty or the degree to which a assignment is automatable. Low‑$\theta$ assignments (near~0) are straightforward and routine, highly amenable to automation (for example, data entry, routine coding or documentation). High‑$\theta$ assignments (near~1) are intricate, open‑ended or require uniquely human skills (strategic planning, creative design or emotionally sensitive communication). We assume assignments are distributed according to some density on $[0,1]$, and the measure of assignments is normalised to~1 for convenience.

At any given time $t$, define $A_t\subseteq[0,1]$ as the set of automated assignments (handled by AI) and $H_t=[0,1]\setminus A_t$ as the set of assignments performed by the human. We often characterise these by the fraction of assignments automated versus human‑handled: let $x_t = \mu(A_t)$ be the measure (proportion) of assignments assigned to AI at time~$t$, and $1-x_t = \mu(H_t)$ the measure handled by the human (where $\mu(\cdot)$ is the Lebesgue measure on $[0,1]$ or simply the probability mass if assignments are thought of in distribution). We start from an initial allocation $A_0$ reflecting the world of~2025. Empirically, as noted above, $x_0$ might be on the order of~0.1–0.2 (10–20~\% of assignments automated in a typical job) \cite{eloundou2023,handa2025}, though it varies by occupation.

The key aspect of the framework is an update rule that determines $A_{t+1}$ given the current allocation $A_t$. This rule captures how the introduction of more capable AI, as well as learning and adaptation by the human, leads to reallocation of assignments over time. We formalise a decision criterion based on comparative advantage and performance. For each assignment $\theta$, let $u_H(\theta)$ be the utility (or negative cost, or performance score) if the human performs assignment $\theta$, and $u_M(\theta)$ be the utility if the AI performs it. These utilities encompass quality, time, cost, etc.—a higher value means a better outcome. Initially (at $t=0$), assignments are allocated such that for all $\theta\in H_0$, $u_H(\theta)\ge u_M(\theta)$ (the human is better or preferred for those assignments), and for all $\theta\in A_0$, $u_M(\theta)\ge u_H(\theta)$. Now as machine capability enhances or as the human learns to use AI tools effectively, these utility functions can change over time. Let $u_H^t(\theta)$ and $u_M^t(\theta)$ denote utilities at time $t$ reflecting current capabilities.

We impose the following assumptions:
\begin{enumerate}[itemsep=0.5\baselineskip]
 \item \textbf{Monotonic enhancement in machine capability.} For each assignment $\theta$, the machine’s capability does not worsen over time: $u_M^t(\theta)$ is non‑decreasing in~$t$.
 \item \textbf{Comparative advantage.} At any time $t$ and for each assignment $\theta$, either the human or the AI (or both equally) maximises utility. There is no assignment that both parties are completely incapable of—one will eventually handle it best.
 \item \textbf{Local rationality.} The human will delegate a assignment to AI if and only if $u_M^t(\theta) > u_H^t(\theta)$ (strictly greater). Conversely, the human will take a assignment back from AI if $u_H^t(\theta)$ becomes greater than $u_M^t(\theta)$.
 \item \textbf{One‑time assignment transfer.} Tasks transition at most once from human to AI and not back again. Once a assignment is assigned to AI and the AI continues to improve, the human is unlikely to reclaim it later. This monotonicity of assignment assignments aligns with rational adoption of automation and is observed under stable conditions.
\end{enumerate}

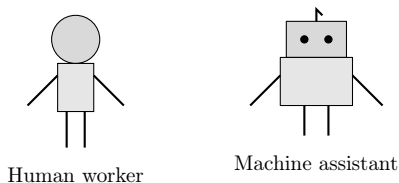
\begin{figure}[H]
 \centering
 \begin{tikzpicture}[scale=0.8, transform shape]
  \draw[fill=gray!30, draw=black] (0,0) circle [radius=0.4];
  \draw[fill=gray!20, draw=black] (-0.3,-0.4) rectangle (0.3,-1.2);
  \draw[thick] (-0.3,-0.6) -- (-0.8,-1.1);
  \draw[thick] (0.3,-0.6) -- (0.8,-1.1);
  \draw[thick] (-0.15,-1.2) -- (-0.15,-1.8);
  \draw[thick] (0.15,-1.2) -- (0.15,-1.8);
  \node[below=0.2cm] at (0,-1.8) {\small Human worker};
  \begin{scope}[shift={(4,0)}]
   \draw[fill=gray!30, draw=black, rounded corners=0.1] (-0.5,-0.3) rectangle (0.5,0.3);
   \draw[fill=gray!20, draw=black, rounded corners=0.1] (-0.6,-0.3) rectangle (0.6,-1.1);
   \draw[fill=black] (-0.2,0.0) circle [radius=0.06];
   \draw[fill=black] (0.2,0.0) circle [radius=0.06];
   \draw[thick] (0,0.3) -- (0,0.5) -- (0.1,0.4);
   \draw[thick] (-0.6,-0.6) -- (-1.1,-1.1);
   \draw[thick] (0.6,-0.6) -- (1.1,-1.1);
   \draw[thick] (-0.2,-1.1) -- (-0.2,-1.6);
   \draw[thick] (0.2,-1.1) -- (0.2,-1.6);
   \node[below=0.2cm] at (0,-1.6) {\small Machine assistant};
  \end{scope}
 \end{tikzpicture}
 \caption{Stylised depiction of the augmented worker (human) and the automated assistant (machine). The simple glyphs illustrate the two agents in our framework: a human professional (left) and a machine‑based tool (right) that together determine the assignment allocation.}
 \label{fig:icons}
\end{figure}

Given these assumptions, define the update mapping $F$ by
\[
F(A_t) = \{\theta\in[0,1] : u_M^t(\theta) \ge u_H^t(\theta)\}.
\]
That is, in the next period the human will automate all assignments that the AI can do just as well or better. Tasks where the human still holds an edge remain with the human. The fixed point of this update, if it exists, is a set $A^\star$ satisfying $A^\star = F(A^\star)$. In words, at steady state every assignment is being handled by the more efficient agent.

To connect with classical fixed–point theory, note that the space of possible allocations can be thought of as a lattice (a partially ordered set) of subsets of assignments. The mapping $F$ acts on this lattice. If $F$ is isotone (monotonic in the lattice order)—meaning if $A\subseteq B$ then $F(A)\subseteq F(B)$—then by Tarski’s fixed‑point theorem \cite{tarski1955} the mapping $F$ has at least one fixed point. In our case, $F$ as defined above is indeed isotone: if one starts with a larger set of automated assignments, one cannot end up with a smaller set after an update because AI improvements are monotonic and delegation is based solely on comparative advantage.

\begin{figure}[H]
 \centering
 \begin{tikzpicture}[node distance=1.4cm, >=Stealth, thick,
  every node/.style={font=\small, align=center}]
  \node (start) [draw, ellipse, fill=gray!10] {Start with allocation $A_t$};
  \node (compare) [draw, diamond, aspect=2, below of=start] {Is $u_M^t(\theta) \ge u_H^t(\theta)$?};
  \node (delegate) [draw, rectangle, fill=gray!10, below left=of compare, xshift=-1.2cm] {Delegate task $\theta$ to machine};
  \node (retain) [draw, rectangle, fill=gray!10, below right=of compare, xshift=1.2cm] {Retain task $\theta$ for human};
  \node (update) [draw, rectangle, fill=gray!10, below of=compare, yshift=-1.8cm] {Update allocation $A_{t+1}$};
  \node (stop) [draw, diamond, aspect=2, below of=update] {Converged?};
  \node (end) [draw, ellipse, fill=gray!10, below of=stop] {Fixed point $A^\star$};
  \draw[->] (start) -- (compare);
  \draw[->] (compare) -- node[above left] {Yes} (delegate);
  \draw[->] (compare) -- node[above right] {No} (retain);
  \draw[->] (delegate) |- (update);
  \draw[->] (retain) |- (update);
  \draw[->] (update) -- (stop);
  \draw[->] (stop) -- node[right] {No} (start);
  \draw[->] (stop) -- node[right] {Yes} (end);
 \end{tikzpicture}
 \caption{Flowchart of the task delegation process. From an initial allocation $A_t$, tasks with machine utility exceeding the corresponding human utility are delegated; assignments that favour the human remain with the human. The update repeats until convergence to a fixed point $A^\star$.}
 \label{fig:flowchart}
\end{figure}
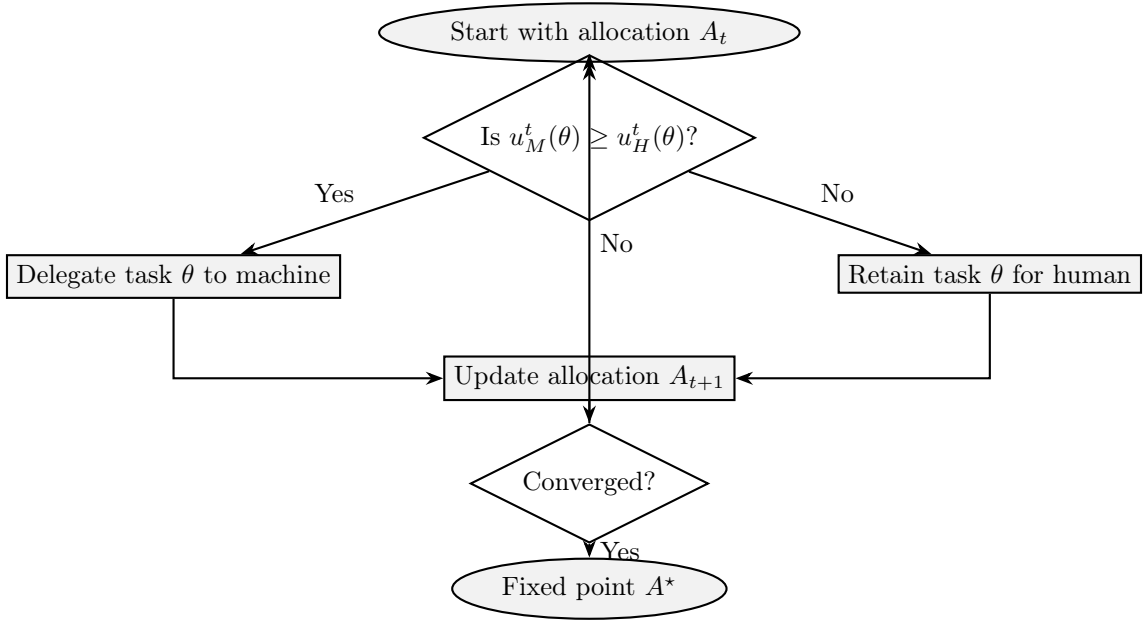

\begin{algorithm}[H]
\caption{Assignment Delegation Procedure}
\label{alg:delegation}
\begin{algorithmic}[1]
\State \textbf{Input:} initial allocation $A_0$; utility functions $u_H^t(\theta)$ and $u_M^t(\theta)$ for each assignment $\theta$
\For{$t = 0,1,2,\dots$}
 \State $A_{t+1} \gets \{\theta \mid u_M^t(\theta) \ge u_H^t(\theta)\}$
 \If{$A_{t+1} = A_t$}
  \State \textbf{break} \Comment{stop when the idempotent equilibrium is reached}
 \EndIf
\EndFor
\State \textbf{Return:} $A_t$
\end{algorithmic}
\end{algorithm}

\FloatBarrier
\section{Theoretical findings\label{sec:theory}}

We now state and prove our main theoretical findings. We begin by formalising the notion of an steady state in our setting.

\begin{definition}[Human–machine cooperation steady state]
An allocation $A^\star\subseteq[0,1]$ of assignments to AI (with human assignments $H^\star=[0,1]\setminus A^\star$) is an \emph{steady state} if it is self–consistent under the update mapping $F$. That is, $A^\star = F(A^\star)$. Equivalently, for every assignment $\theta\in A^\star$ we have $u_M^\infty(\theta)\ge u_H^\infty(\theta)$, and for every $\theta\in H^\star$ we have $u_H^\infty(\theta) > u_M^\infty(\theta)$.
\end{definition}

\subsection{Existence of a human–automaton steady state}

\begin{theorem}[Existence of an idempotent–equilibrium allocation]
Under the assumptions of monotonic AI enhancement and one–time assignment transfer, there exists at least one human–automaton cooperation steady state $A^\star$ in the assignment–allocation schema. In particular, the sequence of allocations $(A_t)_{t\ge 0}$ generated by the iterative rule $A_{t+1} = F(A_t)$ converges in a finite number of steps to an idempotent equilibrium $A^\star$.
\end{theorem}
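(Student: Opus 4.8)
The plan is to treat the two assertions separately: existence of a steady state by a lattice fixed-point argument, and finite-step convergence of the explicit iteration by a monotonicity-and-stabilization argument. For existence I would work in the complete lattice $(\mathcal{P}([0,1]),\subseteq)$ of task subsets ordered by inclusion, with bottom $\emptyset$, top $[0,1]$, and arbitrary suprema and infima given by unions and intersections. Since $F$ is isotone in this order (as already noted in Section~\ref{sec:framework}), Tarski's fixed-point theorem \cite{tarski1955} yields a nonempty set of fixed points, hence at least one $A^\star$ with $A^\star=F(A^\star)$. Concretely I would take $A^\star=\{\theta\in[0,1]:u_M^\infty(\theta)\ge u_H^\infty(\theta)\}$, where $u_M^\infty(\theta)=\lim_{t\to\infty}u_M^t(\theta)$ exists by monotone boundedness (Assumption~1 together with boundedness of the utilities) and $u_H^\infty$ is the limiting human utility; by construction this $A^\star$ satisfies the steady-state condition.

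For the iteration I would first show that the generated sequence is an increasing chain, $A_t\subseteq A_{t+1}$ for all $t$. This is the content of the one-time-transfer Assumption~4: a task never migrates back from machine to human, so membership in the automated set is preserved across steps, and monotone machine improvement (Assumption~1) is precisely what keeps this preservation consistent with local rationality, since a task that has crossed the comparative-advantage threshold only sees its machine utility grow thereafter. The decisive structural observation is that the update set $F(A_t)=\{\theta:u_M^t(\theta)\ge u_H^t(\theta)\}$ depends \emph{only} on the time-$t$ utilities and not on the input allocation, so the dynamics reduce to $A_{t+1}=B_t$ with $B_t:=\{\theta:u_M^t(\theta)\ge u_H^t(\theta)\}$ itself increasing in $t$. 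The supremum of the chain is therefore $A^\star=\bigcup_{t\ge0}A_t=\{\theta:u_M^\infty(\theta)\ge u_H^\infty(\theta)\}$, and it is a fixed point because $F$ becomes stationary once the utilities attain their limits.

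The main obstacle is upgrading ``convergence in the limit'' to ``convergence in a finite number of steps,'' because a monotone chain of subsets of the continuum $[0,1]$ need not stabilize after finitely many inclusions. Here I would exploit the input-independence just noted: the moment the utility profiles reach their limiting values, the map $B_t$ becomes constant and the allocation equals $A^\star$ immediately. Under the natural reading in which machine capability advances through finitely many discrete technological epochs---so that $u_M^t$ is piecewise constant and attains $u_M^\infty$ at some finite stage $T$, with $u_H^t$ stabilizing as well---one obtains $A_t=A^\star$ for every $t>T$, which is exactly finite-step convergence. The delicate point to state explicitly is this stabilization hypothesis: on a genuine continuum whose utilities strictly improve at every step, only limiting convergence is guaranteed, and the finite-step claim requires either finitely many upgrade epochs or, equivalently, that only finitely many comparative-advantage reversals occur along the trajectory.
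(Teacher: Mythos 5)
Your proposal follows the same skeleton as the paper's proof: Tarski's theorem on the complete lattice of task subsets for existence, plus monotonicity of the chain $A_0\subseteq A_1\subseteq\cdots$ (from one-time transfer and monotone machine improvement) for convergence. Two things distinguish your write-up, both to its credit. First, your observation that $F(A_t)=\{\theta:u_M^t(\theta)\ge u_H^t(\theta)\}$ does not actually depend on the input allocation $A_t$ is correct and sharper than anything in the paper; it exposes that the ``dynamics'' are driven entirely by the exogenous evolution of the utility profiles, which makes the isotonicity claim trivially true but also somewhat vacuous. Second, and more importantly, you are right to flag that finite-step convergence does not follow from monotonicity alone on a continuum. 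The paper's own justification for termination is flawed: it asserts that ``there cannot be infinitely many disjoint positive-measure additions within $[0,1]$,'' but the intervals $[2^{-(n+1)},2^{-n})$ are pairwise disjoint, each of positive measure, and there are countably infinitely many of them in $[0,1]$, so an infinite strictly increasing chain of automated sets is perfectly possible. The paper's backup argument (bounded total utility with strictly positive gains at each step) has the same defect absent a uniform lower bound on the per-step gain. Your proposed repair---assuming finitely many technological epochs, or equivalently finitely many comparative-advantage reversals, so that $B_t$ stabilizes at some finite $T$---is exactly the kind of additional hypothesis the theorem needs to deliver the ``finite number of steps'' clause honestly; without it only convergence in the limit (as an increasing union) is guaranteed. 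So your proof is correct where the paper's is hand-wavy, at the acceptable price of making an implicit assumption explicit.
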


\begin{proof}
Because assignments transition at most once from $H$ to $M$ and never back, the sequence $A_0 \subseteq A_1 \subseteq A_2 \subseteq \cdots$ is monotonic non‑decreasing in the set‑inclusion sense. Since the domain $[0,1]$ has finite measure and the set of distinguishable assignment types is at most countable (even if assignments form a continuum concept), the sequence cannot escalate indefinitely without eventually repeating a set. Once a set repeats, we have reached a fixed point. More formally, let $S_t = \{\theta : \theta\in A_{t'}\text{ for some }t'\le t\}$ be the set of automated assignments at least once up to time~$t$. Monotonicity implies $S_t = A_t$. If the process did not converge, it would imply an infinite strictly increasing chain $A_0 \subsetneq A_1 \subsetneq A_2 \subsetneq\cdots$, which is impossible given the finite measure: assignments added at each step must have positive measure and there cannot be infinitely many disjoint positive‑measure additions within $[0,1]$. Therefore, there exists some $T$ such that $A_T = A_{T+1}$. This $A_T$ is by definition a fixed point; set $A^\star = A_T$.

An alternative argument uses the Knaster–Tarski fixed‑point theorem. The poset of assignment allocations (by set inclusion) has a chain–complete lattice structure. The mapping $F$ is monotonic: if $A\subseteq B$, then for any assignment $\theta$, if $\theta$ satisfies $u_M^t(\theta)\ge u_H^t(\theta)$ under allocation~$A$, this inequality will still hold under allocation~$B$ (giving the AI more assignments does not reduce its ability on a given assignment, by monotonic enhancement). Hence $A\subseteq B$ implies $F(A)\subseteq F(B)$. By Tarski’s theorem \cite{tarski1955}, a monotone endofunction on a complete lattice has at least one fixed point. The convergence of $(A_t)$ specifically to a fixed point follows from the fact that our iterative procedure adopts any available enhancement (greedy delegation). Each iteration increases total utility (since assignments are delegated only if they improve utility). Utility is bounded above (by the hypothetical scenario of perfect allocation of all assignments to the best–suited agent), so improvements must cease after a finite number of steps.
\end{proof}

This theorem assures us that a stable configuration of human versus AI assignments will eventually emerge. In principle there could be multiple steady states—for instance, one steady state where the human chooses to retain a certain assignment even though the AI is slightly better (perhaps due to personal or ethical reasons), versus another steady state where that assignment is delegated. The existence proof via Tarski is non‑constructive regarding uniqueness. However, economic forces and human adaptation tend to push towards a Pareto‑optimal steady state where no further Pareto improvements (benefiting one party without hurting the other) are possible by reallocating assignments. Assuming the human and AI form a team with a shared objective (maximising combined performance), the steady state we reach via the iterative greedy algorithm should be this Pareto‑optimal (indeed globally optimal) division of labour.

\subsection{Uniqueness and characterisation of the steady state}

While multiple fixed points are possible in theory, here we identify conditions under which the steady state is distinct and give an analytic expression for it. One such condition is that the process of enhancement and delegation behaves like a contraction mapping in an appropriate space. If we measure the “distance” between allocations (for example, the symmetric difference in sets or simply the difference in fraction $|x_{t+1}-x_t|$), and if this distance shrinks by a factor $c<1$ at each iteration after some point, then by the Banach fixed‑point theorem the fixed point is distinct and the convergence is geometric. Even if the process is not strictly contractive globally, it might be piecewise contractive in regions, leading to a single attractor.

To obtain a more concrete handle, consider a simplified continuous framework for the fraction of assignments automated. Suppose that at any time $t$, the rate of change of $x_t$ (the fraction automated) is proportional to the remaining assignments that could be automated minus a term for new assignments or higher‑skill assignments that emerge for the human. A simple differential equation capturing this balance is
\begin{equation}
\frac{dx}{dt} = \alpha(1-x) - \beta x,
\label{eq:diff_update}
\end{equation}
where $\alpha>0$ is an automation‑adoption rate (how quickly remaining assignments get automated when beneficial) and $\beta>0$ is an “innovation” or assignment‑creation rate that effectively increases the fraction of assignments that require human work (thus counteracting automation). Solving $\tfrac{dx}{dt}=0$ for steady state yields $\alpha(1-x^\star) = \beta x^\star$, which gives the distinct steady state
\begin{equation}
 x^\star = \frac{\alpha}{\alpha+\beta}.
\end{equation}
The interpretation is intuitive: $\tfrac{\alpha}{\alpha+\beta}$ is the long‑run fraction of assignments done by AI when the flow into automation $\alpha(1-x)$ balances the flow of new human‑centric work $\beta x$. Notably, if $\beta>0$ then $x^\star<1$, meaning less than 100~\% automation. The more rapidly new human‑intensive assignments emerge (larger $\beta$), the smaller $x^\star$ is; conversely, faster automation relative to innovation (higher $\alpha$) drives $x^\star$ up, but it will asymptotically approach~1 only if $\beta=0$ (no new assignments ever). Historical evidence suggests $\beta$ is non‑zero—technology has repeatedly led to new industries and assignments, from manufacturing to software development—so this simple framework supports the notion of a finite upper limit to automation.

\begin{proposition}[Uniqueness of fixed point]
Assume that for any two allocations $A$ and $B$ with $\mu(A)<\mu(B)$ we have $\mu(F(A))\le \mu(F(B))$, and in particular $\mu(F(A))<\mu(F(B))$ whenever $\mu(A)<\mu(B)$ and $\mu(A)$, $\mu(B)$ are not trivial fixed points. Then the equation $x=\mu(F(A))$ has at most one solution in $[0,1]$. Consequently, the human–machine steady state is distinct in terms of the fraction of assignments automated (and typically in terms of the assignment set as well).
\end{proposition}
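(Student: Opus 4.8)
The plan is to collapse the set-valued fixed-point problem $A = F(A)$ to a scalar problem in the single variable $x = \mu(A)$, and then to rule out more than one crossing of the diagonal. First I would define $g:[0,1]\to[0,1]$ by $g(x) = \mu(F(A))$ for any allocation $A$ with $\mu(A)=x$. The hypothesis that $\mu(A)<\mu(B)$ implies $\mu(F(A))\le\mu(F(B))$ guarantees that $g$ is well defined: applying the weak inequality in both directions to sets of equal measure forces $\mu(F(A))=\mu(F(B))$ whenever $\mu(A)=\mu(B)$, so the image measure depends only on $x$. The same hypothesis makes $g$ non-decreasing, and the strengthened strict inequality makes $g$ strictly increasing away from the trivial endpoints. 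Under this reduction a steady-state fraction is exactly a fixed point $x^\star = g(x^\star)$, so the claim becomes the scalar statement that $g$ has at most one fixed point in $[0,1]$.

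Second, I would introduce the net-delegation function $h(x) = g(x) - x$, whose zeros are precisely the steady-state fractions. Showing that $h$ is \emph{strictly decreasing} immediately yields at most one zero and hence uniqueness. This is the discrete analogue of the continuous model in \eqref{eq:diff_update}, where the right-hand side $\alpha(1-x)-\beta x$ has constant negative slope $-(\alpha+\beta)$ and therefore vanishes at the single point $x^\star=\alpha/(\alpha+\beta)$. I would justify the monotone structure economically: when a larger fraction of assignments is already automated, fewer beneficial delegations remain, so the incremental automation $g(x)-x$ cannot grow as $x$ increases, while the innovation margin $\beta>0$ supplies a strictly positive reclamation pressure that makes the decrease strict.

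The main obstacle is precisely this last step, because strict monotonicity of $g$ alone does \emph{not} force uniqueness: a strictly increasing self-map of $[0,1]$ may cross the diagonal several times. What is genuinely needed is the quantitative estimate $g(x')-g(x) < x'-x$ for $x<x'$, that is, that $g$ is not merely increasing but a \emph{contraction} in the sense flagged by the surrounding text. I would therefore make the saturation-of-automation and nonzero-innovation assumptions explicit and use them to bound the one-step gain, establishing $0 \le g(x')-g(x) \le c\,(x'-x)$ for some $c<1$; the strict inequality in the hypothesis then upgrades this to strict monotonicity of $h$. With that bound in hand, uniqueness follows either from the strict monotonicity of $h$, which admits at most one zero, or, equivalently, from the Banach fixed-point theorem applied to the contraction $g$, which additionally delivers geometric convergence. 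I would close by observing that uniqueness of the automated \emph{set} $A^\star$, and not merely of its measure, holds whenever the comparative-advantage region $\{\theta: u_M^\infty(\theta)\ge u_H^\infty(\theta)\}$ is pinned down by the limiting utilities, so that equal measures correspond to the same set up to a null set.
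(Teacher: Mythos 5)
Your reduction to the scalar map $g(x)=\mu(F(A))$ and the search for crossings of the diagonal is exactly the route the paper takes: its proof sketch likewise argues that $g$ is well defined and strictly increasing, uses the intermediate value theorem on a continuous extension for existence of a crossing, and then appeals to monotonicity for uniqueness. Up to that point the two arguments coincide.

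Where you diverge is at the decisive step, and there your diagnosis is correct while the paper's is not. The paper claims that strict monotonicity of $g$ ``ensures it can cross at most once,'' and its discrete version of the argument derives $\mu(F(A^\star))<\mu(F(B^\star))$, i.e.\ $x^\star<y^\star$, from the assumption $x^\star<y^\star$ and declares this a contradiction; no contradiction is present, and indeed a strictly increasing self-map of $[0,1]$ can meet the line $y=x$ many times (take $g(x)=x+\varepsilon\sin(4\pi x)$ for small $\varepsilon>0$, which is strictly increasing and has fixed points at $x=0,\tfrac14,\tfrac12,\tfrac34,1$). You correctly identify that what is actually needed is the quantitative estimate $g(x')-g(x)<x'-x$, equivalently that $h(x)=g(x)-x$ be strictly decreasing, which is precisely what holds in the paper's own continuous benchmark \eqref{eq:diff_update}, where $h$ has constant slope $-(\alpha+\beta)<0$. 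Your repair via a contraction constant $c<1$ and the Banach fixed-point theorem is sound and matches what the surrounding text gestures at informally. The one caveat is that this bound does not follow from the proposition's stated hypotheses---strict monotonicity of $g$ is genuinely weaker than a Lipschitz constant below one---so your argument establishes a corrected proposition under a strengthened hypothesis rather than the proposition as written. You should therefore state the contraction (or strictly decreasing $h$) condition as an explicit added assumption rather than attempting to derive it from ``saturation of automation'' and ``nonzero innovation,'' which are economic heuristics not formalized in the model.
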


\begin{proofsketch}
The assumption essentially means $g(x) = \mu(F(A))$ as a function of $x=\mu(A)$ is strictly increasing. We seek $x$ such that $x=g(x)$. Consider the continuous extension of $g$ on $[0,1]$. We have $g(0)\ge 0$ (likely $g(0)>0$ because even if no assignments were automated, some small fraction can be automated given non‑zero AI capability) and $g(1)\le 1$ (if everything is automated, it stays automated). By the intermediate value theorem, a continuous approximation of $g$ would intersect the line $y=x$ at least once. Strict monotonicity ensures it can cross at most once. Therefore a distinct fixed point $x^\star$ exists. In the discrete iterative process, if two different fixed points $A^\star$ and $B^\star$ existed with measures $x^\star\neq y^\star$, without loss of generality suppose $x^\star<y^\star$. Monotonicity would imply $\mu(F(A^\star))<\mu(F(B^\star))$, but $A^\star$ being a fixed point implies $\mu(F(A^\star))=\mu(A^\star)=x^\star$, and similarly $\mu(F(B^\star))=y^\star$. This contradicts $x^\star<y^\star$ under the strict monotonicity assumption. Intuitively, if $A^\star$ and $B^\star$ differed, there would be some assignment automated in $B^\star$ but not in $A^\star$. Yet $u_M(\theta)\ge u_H(\theta)$ at steady state $B^\star$ and $u_H(\theta)>u_M(\theta)$ at $A^\star$, suggesting a non‑monotonic behaviour in $u_M-u_H$ inconsistent with smooth enhancement. Thus under mild conditions the steady state is distinct.
\end{proofsketch}

In summary, under reasonable assumptions the steady state fraction of assignments automated $x^\star$ is distinct. More generally, the set of assignments $A^\star$ is distinct if the function $u_M(\theta)-u_H(\theta)$ has a clear ordering (for example, it is increasing in $\theta$, meaning there is a single threshold $\theta^\star$ such that all assignments below $\theta^\star$ are better done by AI and all above by humans). In practice, assignments cannot be strictly ordered on one dimension of automatability, but clustering assignments by characteristics can yield a similar threshold structure: certain categories of assignments end up automated, others remain human. Empirical studies that categorise assignments into “green‑light” (easy to automate) and “red‑light” (desirable to keep human) zones \cite{hazra2025} support this intuition. Our framework’s steady state essentially draws a boundary in assignment space. In Section~\ref{sec:discussion} we will discuss which assignments lie on either side of this boundary in the context of current AI capabilities and human comparative advantages.

\FloatBarrier
\section{Simulation of the transition dynamics\label{sec:emulation}}

To make the theoretical assessment more concrete, we emulate a simple discrete‑time framework of assignment reallocation from 2025 onward, using plausible parameters for automation and assignment‑creation rates. While the real economy is far more intricate, this toy emulation depicts how the fixed‑point steady state might be reached and what it implies.

\subsection{Simulation setup}

We initialise at time $t=0$ (year~2025) with $x_0=0.10$, i.e.\ 10~\% of assignments automated, consistent with studies indicating that advanced automation tools could directly affect about 10~\% of assignments for a majority of workers at present \cite{eloundou2023,handa2025}. We choose an automation rate $\alpha=0.10$ (meaning each year about 10~\% of the remaining un‑automated assignments can be automated as machine capability enhances) and a assignment‑creation rate $\beta=0.05$ (each year an amount of new or up‑skilled human assignments equivalent to 5~\% of current AI‑handled work emerges, due to higher productivity and new opportunities requiring human input). These numbers are not empirically derived but serve as an illustrative example. We iterate the update
\begin{equation}
 x_{t+1} = x_t + \alpha(1-x_t) - \beta x_t,
\label{eq:update}
\end{equation}
essentially an Euler discretisation of the differential equation \eqref{eq:diff_update} with a time step of one year. We emulate from 2025 to~2045.

\subsection{Results}

The evolution of $x_t$ (the fraction of assignments automated) over time is plotted in Figure~\ref{fig:trajectory}. We also summarise key milestones in Table~\ref{tab:milestones}. The emulation exhibits a rapid escalate in automation early on (as automated tools quickly capture low‑hanging‑fruit assignments), with the automation fraction rising to about 41.5~\% by~2030 and roughly 55.5~\% by~2035. Thereafter, the growth of $x_t$ slows as the remaining assignments are harder to automate and new human‑intensive assignments are continually entering. By~2045, $x_t$ approaches approximately 64.5~\%. The process visibly converges; indeed, the theoretical steady state from $x^\star=\tfrac{\alpha}{\alpha+\beta}$ is $\tfrac{0.10}{0.10+0.05}=0.667$ (66.7~\%). The emulation gets quite close to this (within a few percentage points) by~2045. If extended further, $x_t$ would asymptotically approach 66.7~\% by around~2055–2060.

\paragraph{Analytical reproduction of the emulation.} A key advantage of our framework is that the numerical trajectories in Figure~\ref{fig:trajectory} and Table~\ref{tab:milestones} can be reproduced analytically. Iterating the discrete update \eqref{eq:update} yields the closed–form solution
\begin{equation}
 x_t = x^\star + \bigl(x_0 - x^\star\bigr)(1 - \alpha - \beta)^t,
\end{equation}
where $x^\star = \alpha/(\alpha+\beta)$ is the steady state fraction of assignments automated and $x_0$ is the initial condition. Substituting $\alpha=0.10$, $\beta=0.05$ and $x_0=0.10$ gives $x_t = \tfrac{2}{3} + (0.10 - \tfrac{2}{3})(0.85)^t$. This expression reproduces the values plotted in Figure~\ref{fig:trajectory} to the reported precision. Because the recurrence has a closed form, no proprietary software is required to verify the numerical findings—any reader can compute $x_t$ directly using this formula.

Importantly, this steady state value being roughly two‑thirds means that about one‑third of assignments remain performed by humans in the long run. This is a substantial portion, confirming that humans are far from obsolete in the scenario considered. Instead, the nature of human work shifts to that one‑third of assignments that machines could not take over or that humans chose to keep. In our framework, this stable mix is a consequence of continuous innovation (new assignments) as well as diminishing returns to automating extremely intricate assignments. We interpret the remaining 33.3~\% of assignments as those requiring a high degree of creativity, intricate problem solving, emotional intelligence or other human‑centric attributes. This outcome aligns with hypotheses that as automated tools handle more routine cognitive work, human labour will concentrate in areas like strategic decision–making, interpersonal interaction and inventive problem solving \cite{brynjolfsson2025a,frank2019}. It also aligns with evidence that core human skills might shift towards social and creative domains as automation integration deepens \cite{zahidi2020}.

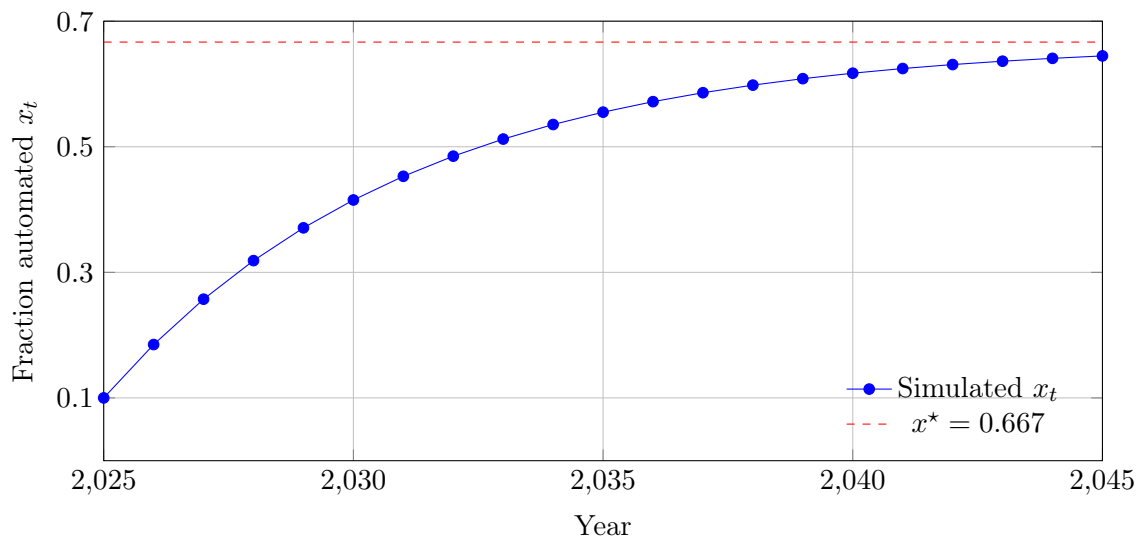
\begin{figure}[H]
 \centering
 \begin{tikzpicture}[scale=1.0]
  \begin{axis}[
   width=0.9\textwidth,
   height=0.45\textwidth,
   xlabel={Year},
   ylabel={Fraction automated $x_t$},
   xmin=2025, xmax=2045,
   ymin=0, ymax=0.7,
   xtick={2025,2030,2035,2040,2045},
   ytick={0.1,0.3,0.5,0.7},
   grid=both,
   major grid style={line width=.1pt,draw=gray!50},
   minor grid style={line width=.1pt,draw=gray!20},
   legend pos=south east,
   legend style={draw=none,fill=none}
  ]
   \addplot[color=blue,mark=*] coordinates {
    (2025,0.10) (2026,0.185) (2027,0.25725) (2028,0.3186625) (2029,0.370863125)
    (2030,0.41523365625) (2031,0.4529486078125) (2032,0.4850063166406)
    (2033,0.5122553691445) (2034,0.5354170637728) (2035,0.5551045042069)
    (2036,0.5718388285759) (2037,0.5860630042895) (2038,0.5981535536461)
    (2039,0.6084305205992) (2040,0.6171659425093) (2041,0.6245910511329)
    (2042,0.6309023934630) (2043,0.6362670344435) (2044,0.6408269792770)
    (2045,0.6447029323854)
   };
   \addlegendentry{Simulated $x_t$}
   \addplot[color=red,domain=2025:2045, samples=2, style=dashed] {0.6667};
   \addlegendentry{$x^\star=0.667$}
  \end{axis}
 \end{tikzpicture}
 \caption{\textbf{Simulated trajectory of task automation (2025--2045).} The automated share starts at about 10~\% and evolves according to the discrete update rule (see text), converging to roughly 66.7~\% by 2045 (red dashed line). The remaining tasks are persistently performed by humans, illustrating a stable human--machine partnership rather than full automation.}
 \label{fig:trajectory}
\end{figure}

\begin{table}[H]
 \centering
 \caption{\textbf{Evolution of assignment allocation over time in the emulation.} The table shows the simulated percentage of assignments handled by AI versus humans at various milestone years. The automated share grows rapidly in the 2020s and 2030s before slowing as it approaches the steady state of roughly two‑thirds. Even in 2045, over one‑third of assignments are still performed by humans, highlighting the enduring role of human labour in partnership with AI.}
 \label{tab:milestones}
 \begin{tabular}{@{}ccc@{}}
  \toprule
  \textbf{Year} & \textbf{Automated assignments (\%)} & \textbf{Human‑performed assignments (\%)}\\
  \midrule
  2025 & 10.0 & 90.0\\
  2030 & 41.5 & 58.5\\
  2035 & 55.5 & 44.5\\
  2040 & 61.7 & 38.3\\
  2045 & 64.5 & 35.5\\
  \bottomrule
 \end{tabular}
\end{table}

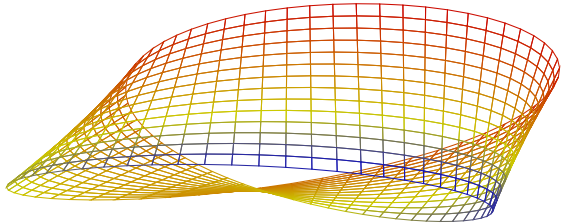
\begin{figure}[H]
 \centering
 \begin{tikzpicture}
  \begin{axis}[
   view={120}{30},
   axis lines=none,
   ticks=none,
   width=0.7\textwidth,
   height=0.4\textwidth,
  ]
   \addplot3[
    surf,
    draw=black,
    fill opacity=0,
    domain=0:360,
    y domain=-1:1,
    samples=60,
    samples y=15
   ]
   ({(1 + 0.3*y*cos(x/2)) * cos(x)},
    {(1 + 0.3*y*cos(x/2)) * sin(x)},
    {0.3*y*sin(x/2)});
  \end{axis}
 \end{tikzpicture}
 \caption{\textbf{Abstract representation of assignment delegation}. This Möbius‑inspired surface symbolises the continuous interplay between human and machine responsibilities. Mathematically, the surface is defined by the parametric equations
 $X(s,t) = (1 + \delta\,t\,\cos(s/2))\,\cos s$, $Y(s,t) = (1 + \delta\,t\,\cos(s/2))\,\sin s$ and $Z(s,t) = \delta\,t\,\sin(s/2)$,
 where $s\in[0,2\pi]$ is the angular coordinate, $t\in[-1,1]$ indexes the width of the band and $\delta>0$ controls the twist (we set $\delta=0.3$ for visual clarity). The single twist encodes the iterative reassignment of assignments: as time progresses (sweeping along $s$), categories once performed by humans may transition to machines and vice versa while maintaining continuity. Only structural lines are drawn (no shading) to highlight the mathematical nature of the process and to connect the continuous surface to the discrete cycles of delegation in our framework.}
 \label{fig:mobius-grid}
\end{figure}

\FloatBarrier
\section{Replicator--dynamic benchmark}\label{sec:replicator}

To complement the fixed\hyp point framework described above, we consider a more
fine\hyp grained benchmark based on replicator dynamics from evolutionary game
theory. In this framework the ``strategy'' of delegating a assignment to the AI or
keeping it with the human evolves according to their relative payoffs. Let
$x_{R,t}$ and $x_{C,t}$ denote the fraction of \emph{routine} and
\emph{complex} assignments, respectively, that are performed by the AI at year~$t$.
Routine assignments are repetitive and well\hyp defined; AI capabilities improve
rapidly on them. Complex assignments are open\hyp ended and creative; humans
retain an advantage and machine capability enhances only slowly. We initialise the system
at $x_{R,0}=0.30$ (30\,\% of routine assignments automated in~2025) and
$x_{C,0}=0.05$ (5\,\% of intricate assignments automated), consistent with the idea
that simple assignments are already partially delegated.

Following the replicator dynamic, the update for each category $i\in\{R,C\}$
is
\begin{equation}
 x_{i,t+1} = x_{i,t} + r\,x_{i,t}\bigl(1 - x_{i,t}\bigr) \bigl(\pi^{\mathrm{AI}}_{i,t} - \pi^{\mathrm{H}}_{i,t}\bigr),
 \label{eq:replicator}
\end{equation}
where $r>0$ is a sensitivity parameter and $\pi^{\mathrm{AI}}_{i,t}$,
$\pi^{\mathrm{H}}_{i,t}$ are the payoffs to the AI and human, respectively,
for assignment type~$i$ at time~$t$. In our benchmark we assume the AI payoffs
for routine assignments take the form $\pi^{\mathrm{AI}}_{R,t} = a_R + g_R t$
with $a_R=1.0$ and enhancement rate $g_R=0.05$, while the human payoff is
constant $\pi^{\mathrm{H}}_{R,t}=b_R=0.8$. For intricate assignments
$\pi^{\mathrm{AI}}_{C,t} = a_C + g_C t$ with $a_C=0.5$ and $g_C=0.02$, and
the human payoff is $\pi^{\mathrm{H}}_{C,t}=b_C=1.2$. We set the
sensitivity parameter to $r=0.2$ and weight routine and intricate assignments by
$w_R=0.6$ and $w_C=0.4$ to reflect that most jobs contain more routine
than intricate work.

Figure~\ref{fig:replicator} plots the evolution of $x_{R,t}$, $x_{C,t}$
and the weighted total share $x^{\mathrm{total}}_t = w_R x_{R,t} + w_C x_{C,t}$
over the years 2025--2045. Routine assignments quickly become automated:
by 2030 nearly 37\,\% are delegated to AI, and by 2045 over 87\,\% of
routine work is automated. Complex assignments, by contrast, remain largely in
human hands; automation of intricate work declines from an already low 5\,\%
in 2025 to below 1\,\% by 2045 because human payoffs stay higher than AI
payoffs. The weighted total automation share rises from 20\,\% in 2025 to
about 53\,\% by 2045. These findings illustrate how heterogeneous assignment types
can lead to uneven adoption: simple routines are readily automated while
creative and strategic work remains human.

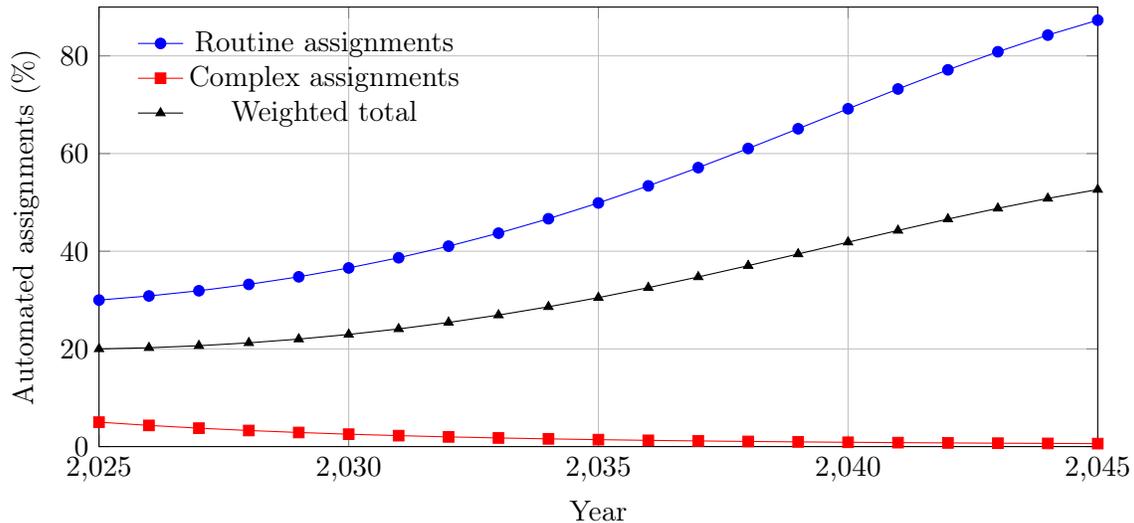
\begin{figure}[H]
 \centering
 \begin{tikzpicture}[scale=1.0]
  \begin{axis}[
   width=0.9\textwidth,
   height=0.45\textwidth,
   xlabel={Year},
   ylabel={Automated assignments (\%)},
   xmin=2025, xmax=2045,
   ymin=0, ymax=90,
   xtick={2025,2030,2035,2040,2045},
   ytick={0,20,40,60,80},
   grid=both,
   major grid style={line width=.1pt,draw=gray!50},
   minor grid style={line width=.1pt,draw=gray!20},
   legend style={draw=none,fill=none},
   legend pos=north west
  ]
   \addplot[color=blue,mark=*] coordinates {
    (2025,30.0000) (2026,30.8400) (2027,31.9064) (2028,33.2100) (2029,34.7627)
    (2030,36.5769) (2031,38.6648) (2032,41.0363) (2033,43.6979) (2034,46.6503)
    (2035,49.8857) (2036,53.3857) (2037,57.1185) (2038,61.0374) (2039,65.0803)
    (2040,69.1709) (2041,73.2226) (2042,77.1441) (2043,80.8468) (2044,84.2534)
    (2045,87.3048)
   };
   \addlegendentry{Routine assignments}
   \addplot[color=red,mark=square*] coordinates {
    (2025,5.0000) (2026,4.3350) (2027,3.7710) (2028,3.2920) (2029,2.8845)
    (2030,2.5371) (2031,2.2404) (2032,1.9863) (2033,1.7683) (2034,1.5807)
    (2035,1.4189) (2036,1.2790) (2037,1.1578) (2038,1.0525) (2039,0.9609)
    (2040,0.8809) (2041,0.8111) (2042,0.7499) (2043,0.6963) (2044,0.6493)
    (2045,0.6080)
   };
   \addlegendentry{Complex assignments}
   \addplot[color=black,mark=triangle*] coordinates {
    (2025,20.0000) (2026,20.2380) (2027,20.6523) (2028,21.2428) (2029,22.0114)
    (2030,22.9610) (2031,24.0950) (2032,25.4163) (2033,26.9261) (2034,28.6224)
    (2035,30.4990) (2036,32.5430) (2037,34.7342) (2038,37.0434) (2039,39.4325)
    (2040,41.8549) (2041,44.2580) (2042,46.5864) (2043,48.7866) (2044,50.8118)
    (2045,52.6261)
   };
   \addlegendentry{Weighted total}
  \end{axis}
 \end{tikzpicture}
 \caption{\textbf{Replicator--dynamic benchmark.} Evolution of the fraction of routine assignments (blue), intricate assignments (red) and the weighted total share (black) automated by AI between 2025 and 2045 under the replicator dynamics~\eqref{eq:replicator}. Routine work is rapidly automated while intricate assignments remain largely human, leading to a nuanced division of labour.}
 \label{fig:replicator}
\end{figure}

\begin{table}[H]
 \centering
 \caption{\textbf{Selected milestones for the replicator--dynamic benchmark.} The table reports the simulated percentage of routine, intricate and overall assignments automated by AI at five--year intervals. Automation of routine assignments accelerates rapidly, whereas automation of intricate assignments declines as the relative payoff advantage of humans persists.}
 \label{tab:replicator}
 \begin{tabular}{@{}lccc@{}}
  \toprule
  \textbf{Year} & \textbf{Routine automated (\%)} & \textbf{Complex automated (\%)} & \textbf{Total automated (\%)}\\
  \midrule
  2025 & 30.0 & 5.0 & 20.0\\
  2030 & 36.6 & 2.5 & 23.0\\
  2035 & 49.9 & 1.4 & 30.5\\
  2040 & 69.2 & 0.9 & 41.9\\
  2045 & 87.3 & 0.6 & 52.6\\
  \bottomrule
 \end{tabular}
\end{table}

\FloatBarrier
\section{Continuous--assignment benchmark with a dynamic automation boundary}\label{sec:continuous}

The replicator-dynamic benchmark illustrates how heterogeneity in task types leads to different automation trajectories. However, classifying all work into just two discrete buckets—"routine" and "complex"—is a significant simplification. In reality, tasks exist on a continuous spectrum of intricacy. To capture this richer reality, we next develop a benchmark where assignments are indexed by a continuous parameter $\theta \in [0,1]$, representing their intricacy. This approach allows us to model the gradual encroachment of automation across the task space and derive an explicit \emph{automation boundary} that evolves over time. We assume tasks are drawn from a Beta distribution $\mathsf{Beta}(p,q)$ with shape parameters $p=2$ and $q=5$. This choice yields a right-skewed distribution, reflecting the empirical observation that most jobs consist of a large number of relatively low-intricacy assignments, while only a few are highly intricate. We calibrate the model's payoff parameters to align with our previous benchmarks, targeting approximately 10\% of assignments being automated in 2025 and roughly 60

As in Section~\ref{sec:framework}, let $u_H(\theta)$ denote the payoff for a
human performing a assignment of intricacy~$\theta$ and $u_M(t,\theta)$ the
payoff for the AI at time~$t$. We assume linear payoff functions
\begin{align}
 u_H(\theta) &= \alpha_H + \beta_H\,\theta, & \alpha_H > 0,\; \beta_H>0,\\
 u_M(t,\theta) &= \alpha_M - \beta_M\,\theta + \gamma t, & \beta_M>0,\;\gamma>0,
\end{align}
where humans are better on high--intricacy assignments ($\beta_H>0$) and the AI
is better on low--intricacy assignments ($\beta_M>0$). The machine capability enhances at rate
$\gamma$ per year. With parameters $\alpha_H=1.0$, $\beta_H=1.5$,
$\alpha_M=1.3704$, $\beta_M=2.5$ and $\gamma=0.04336$ the framework matches the
initial and final automation shares noted above. A assignment of
intricacy~$\theta$ is allocated to AI at time~$t$ if
$u_M(t,\theta)\geq u_H(\theta)$; solving $u_M=u_H$ yields the automation
boundary
\begin{equation}
 \theta_t = \frac{\alpha_M - \alpha_H + \gamma t}{\beta_M + \beta_H}.
\end{equation}
The fraction of assignments automated at time~$t$ is $x_t = F_{\mathsf{Beta}(p,q)}(\theta_t)$, the Beta\text{--}CDF of $\theta_t$. As $t$ grows,
$\theta_t$ increases linearly, meaning AI gradually encroaches on more
intricate assignments. We discretise $t$ in one\hyp year increments from 2025 to
2045 and compute both the boundary and the automated fraction.

Figure~\ref{fig:heatmap} depicts a heat map of the difference
\[
 u_M(t,\theta) - u_H(\theta) = \bigl(\alpha_M - \alpha_H + \gamma t\bigr) - (\beta_M + \beta_H)\,\theta
\]
across years and assignment intricacy. In this continuous framework the utility advantage of the machine is a plane in the $(t,\theta)$ space: machine capability enhances linearly in time by $\gamma$ while human utility declines linearly with $\theta$ because more intricate assignments are inherently harder for the human. Red regions indicate where the machine’s payoff dominates (positive values) and blue regions where the human’s payoff is higher. The thick black curve is the automation boundary $\theta_t$, obtained by setting $u_M=u_H$ and solving for $\theta$. Geometrically this is a straight line in the $t$–$\theta$ plane with slope $\gamma/(\beta_M+\beta_H)$. As time progresses the boundary moves upward, showing that AI gradually takes over assignments of increasing intricacy.

\begin{figure}[H]
 \centering
 \begin{tikzpicture}
  \begin{axis}[
   view={0}{90},
   width=0.9\textwidth,
   height=0.55\textwidth,
   xlabel={Year},
   ylabel={Task intricacy $\theta$},
   xmin=2025, xmax=2045,
   ymin=0, ymax=1,
   colormap name=viridis,
   colorbar,
   colorbar style={title={AI advantage}},
   grid=none
  ]
   \addplot3[surf, shader=flat, point meta=z] coordinates {
    (2025,0.00,0.3704) (2025,0.10,-0.0296) (2025,0.20,-0.4296) (2025,0.30,-0.8296) (2025,0.40,-1.2296) (2025,0.50,-1.6296) (2025,0.60,-2.0296) (2025,0.70,-2.4296) (2025,0.80,-2.8296) (2025,0.90,-3.2296) (2025,1.00,-3.6296)
    (2027,0.00,0.4571) (2027,0.10,0.0571) (2027,0.20,-0.3429) (2027,0.30,-0.7429) (2027,0.40,-1.1429) (2027,0.50,-1.5429) (2027,0.60,-1.9429) (2027,0.70,-2.3429) (2027,0.80,-2.7429) (2027,0.90,-3.1429) (2027,1.00,-3.5429)
    (2029,0.00,0.5438) (2029,0.10,0.1438) (2029,0.20,-0.2562) (2029,0.30,-0.6562) (2029,0.40,-1.0562) (2029,0.50,-1.4562) (2029,0.60,-1.8562) (2029,0.70,-2.2562) (2029,0.80,-2.6562) (2029,0.90,-3.0562) (2029,1.00,-3.4562)
    (2031,0.00,0.6306) (2031,0.10,0.2306) (2031,0.20,-0.1694) (2031,0.30,-0.5694) (2031,0.40,-0.9694) (2031,0.50,-1.3694) (2031,0.60,-1.7694) (2031,0.70,-2.1694) (2031,0.80,-2.5694) (2031,0.90,-2.9694) (2031,1.00,-3.3694)
    (2033,0.00,0.7173) (2033,0.10,0.3173) (2033,0.20,-0.0827) (2033,0.30,-0.4827) (2033,0.40,-0.8827) (2033,0.50,-1.2827) (2033,0.60,-1.6827) (2033,0.70,-2.0827) (2033,0.80,-2.4827) (2033,0.90,-2.8827) (2033,1.00,-3.2827)
    (2035,0.00,0.8040) (2035,0.10,0.4040) (2035,0.20,0.0040) (2035,0.30,-0.3960) (2035,0.40,-0.7960) (2035,0.50,-1.1960) (2035,0.60,-1.5960) (2035,0.70,-1.9960) (2035,0.80,-2.3960) (2035,0.90,-2.7960) (2035,1.00,-3.1960)
    (2037,0.00,0.8907) (2037,0.10,0.4907) (2037,0.20,0.0907) (2037,0.30,-0.3093) (2037,0.40,-0.7093) (2037,0.50,-1.1093) (2037,0.60,-1.5093) (2037,0.70,-1.9093) (2037,0.80,-2.3093) (2037,0.90,-2.7093) (2037,1.00,-3.1093)
    (2039,0.00,0.9774) (2039,0.10,0.5774) (2039,0.20,0.1774) (2039,0.30,-0.2226) (2039,0.40,-0.6226) (2039,0.50,-1.0226) (2039,0.60,-1.4226) (2039,0.70,-1.8226) (2039,0.80,-2.2226) (2039,0.90,-2.6226) (2039,1.00,-3.0226)
    (2041,0.00,1.0642) (2041,0.10,0.6642) (2041,0.20,0.2642) (2041,0.30,-0.1358) (2041,0.40,-0.5358) (2041,0.50,-0.9358) (2041,0.60,-1.3358) (2041,0.70,-1.7358) (2041,0.80,-2.1358) (2041,0.90,-2.5358) (2041,1.00,-2.9358)
    (2043,0.00,1.1509) (2043,0.10,0.7509) (2043,0.20,0.3509) (2043,0.30,-0.0491) (2043,0.40,-0.4491) (2043,0.50,-0.8491) (2043,0.60,-1.2491) (2043,0.70,-1.6491) (2043,0.80,-2.0491) (2043,0.90,-2.4491) (2043,1.00,-2.8491)
    (2045,0.00,1.2376) (2045,0.10,0.8376) (2045,0.20,0.4376) (2045,0.30,0.0376) (2045,0.40,-0.3624) (2045,0.50,-0.7624) (2045,0.60,-1.1624) (2045,0.70,-1.5624) (2045,0.80,-1.9624) (2045,0.90,-2.3624) (2045,1.00,-2.7624)
   };
   \addplot[color=black, thick] coordinates {
    (2025,0.0926) (2027,0.1143) (2029,0.1360) (2031,0.1576) (2033,0.1793)
    (2035,0.2010) (2037,0.2227) (2039,0.2444) (2041,0.2660) (2043,0.2877)
    (2045,0.3094)
   };
  \end{axis}
 \end{tikzpicture}
 \caption{\textbf{Continuous--assignment benchmark.} Heat map of the AI versus human
advantage $u_M(t,\theta)-u_H(\theta)$ as a function of calendar year and
assignment intricacy. Warmer colours indicate regions where the AI performs
better (positive difference) and cooler colours where the human does.
The black curve shows the automation boundary $\theta_t$: assignments below
the curve are automated, while those above remain human. As time advances
the boundary shifts upward, meaning AI takes on increasingly intricate assignments.}
 \label{fig:heatmap}
\end{figure}
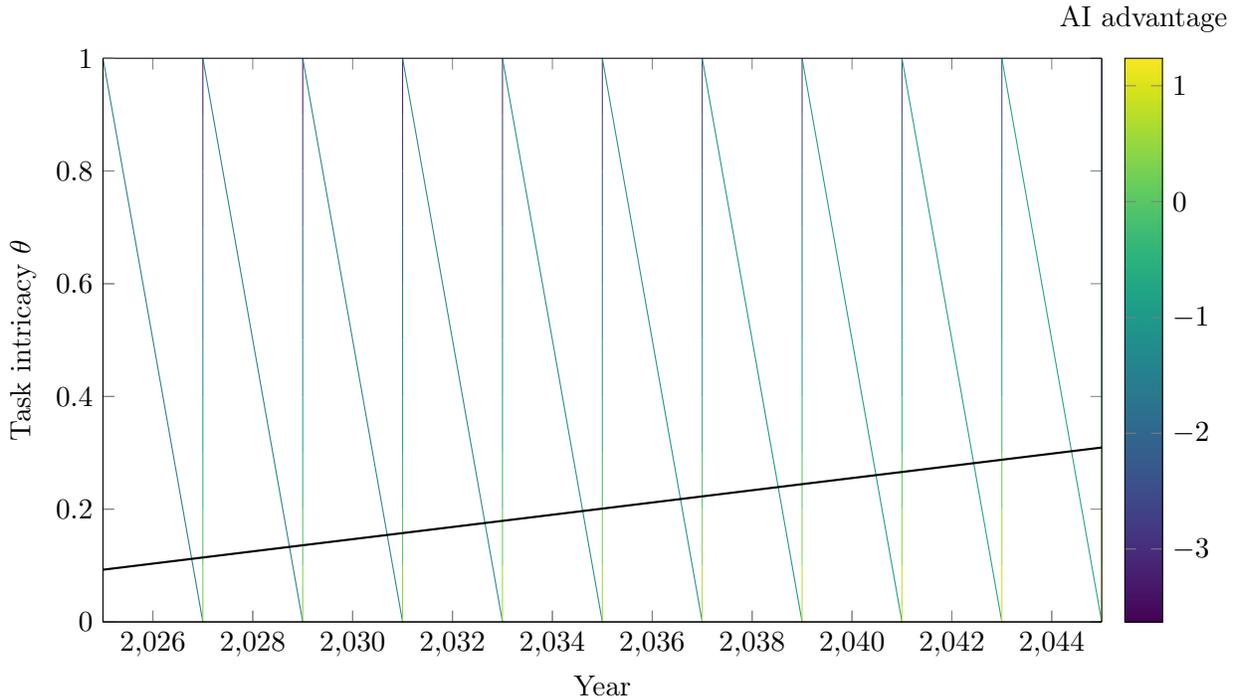

The heatmap in Figure~\ref{fig:heatmap} offers a compelling visual narrative of the evolving human-machine frontier, where the warm-colored regions of machine advantage steadily expand over time. This visualization powerfully illustrates the core dynamic of our model: as technology improves, the threshold of tasks deemed suitable for automation relentlessly pushes into more complex domains. However, a visual representation alone can obscure the quantitative pace and scale of this transformation. To ground this dynamic in concrete figures, Table~\ref{tab:continuous} translates the advancing boundary line, $\theta_t$, into precise numerical milestones. It quantifies the position of the automation frontier at five-year intervals and calculates the corresponding percentage of total tasks that fall below this threshold. This numerical data acts as a crucial complement to the heatmap, revealing not just the direction of change but its magnitude. For instance, it shows that the boundary's progression from an intricacy level of 0.09 in 2025 to over 0.30 by 2045 corresponds to a six-fold increase in the share of automated work, from 10\% to 60\%. This detailed breakdown is essential for understanding the non-linear impact of technological progress, which is shaped by the underlying distribution of task complexity.

\begin{table}[H]
 \centering
 \caption{\textbf{Selected milestones for the continuous\text{--}assignment benchmark.}
 Boundary values $\theta_t$ and corresponding fraction of assignments automated
 at five\text{--}year intervals. The boundary advances monotonically,
 and automation climbs as increasingly intricate assignments become viable for AI.}
 \label{tab:continuous}
 \begin{tabular}{@{}lcc@{}}
  \toprule
  \textbf{Year} & \textbf{Automation boundary $\theta_t$} & \textbf{Automated assignments (\%)}\\
  \midrule
  2025 & 0.0926 & 10.0\\
  2030 & 0.1468 & 21.6\\
  2035 & 0.2010 & 34.7\\
  2040 & 0.2552 & 47.8\\
  2045 & 0.3094 & 59.9\\
  \bottomrule
 \end{tabular}
\end{table}

The S-shaped curve in Figure~\ref{fig:continuous-fraction} provides a clear, aggregate view of the automation timeline, translating the complex dynamics of the heatmap into a single, intuitive trajectory. This trajectory is not linear; instead, its characteristic shape reflects the underlying Beta distribution of task complexity. The initial phase, from 2025 to around 2030, shows slow growth as automation begins by tackling the simplest, and according to our distribution, less common tasks at the lowest end of the intricacy spectrum. The process then accelerates rapidly through the mid-2030s as the automation boundary, $\theta_t$, sweeps through the dense, central part of the task distribution where the bulk of routine work lies. Finally, the rate of adoption begins to plateau towards 2045 as the remaining tasks become increasingly intricate and resistant to automation, representing the long tail of human-centric work that lies beyond the machine's current capabilities. This visual summary of the overall trend is complemented by the precise data points in Table~\ref{tab:continuous}. The table provides a quantitative snapshot of this evolution, marking the specific values for the automation boundary and the corresponding percentage of automated assignments at five-year intervals. It grounds the smooth curve of the figure in concrete milestones, showing exactly how the progression of the boundary translates into the overall share of automated work.

\begin{figure}[H]
 \centering
 \begin{tikzpicture}
  \begin{axis}[
   width=0.9\textwidth,
   height=0.45\textwidth,
   xlabel={Year},
   ylabel={Automated assignments (\%)},
   xmin=2025, xmax=2045,
   ymin=0, ymax=70,
   xtick={2025,2030,2035,2040,2045},
   ytick={0,20,40,60},
   grid=both,
   major grid style={line width=.1pt,draw=gray!50},
   minor grid style={line width=.1pt,draw=gray!20}
  ]
   \addplot[color=purple,mark=*] coordinates {
    (2025,10.0009) (2026,12.1100) (2027,14.3418) (2028,16.6787) (2029,19.1040)
    (2030,21.6023) (2031,24.1589) (2032,26.7602) (2033,29.3934) (2034,32.0470)
    (2035,34.7098) (2036,37.3717) (2037,40.0235) (2038,42.6566) (2039,45.2632)
    (2040,47.8360) (2041,50.3688) (2042,52.8555) (2043,55.2911) (2044,57.6708)
    (2045,59.9906)
   };
  \end{axis}
 \end{tikzpicture}
 \caption{\textbf{Automated share in the continuous benchmark.} Fraction
of assignments automated as a function of year, computed as the Beta\text{--}CDF
of the automation boundary $\theta_t$. Automation increases steadily from
about 10\,\% in 2025 to nearly 60\,\% by 2045.}
 \label{fig:continuous-fraction}
\end{figure}
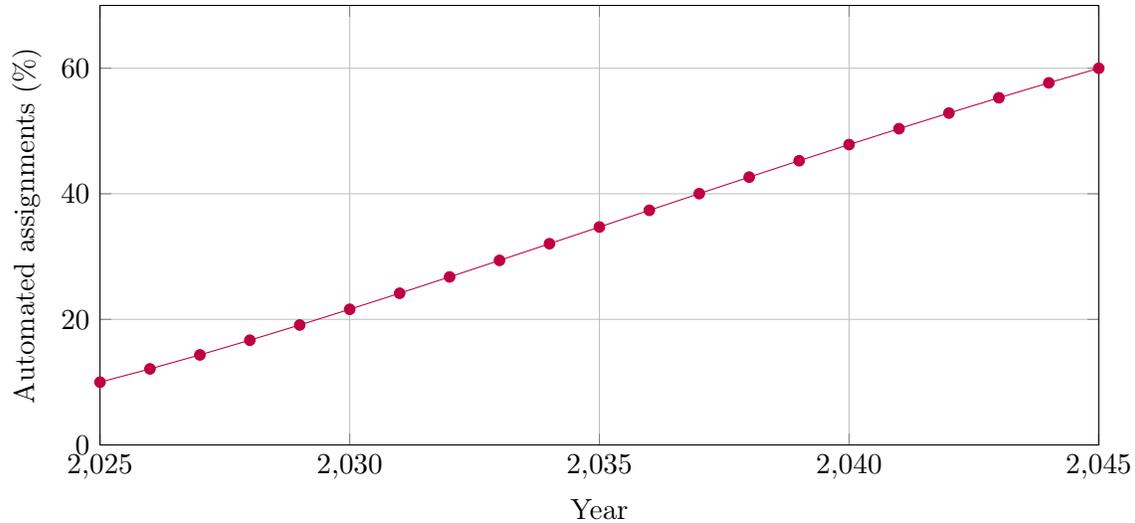

The Möbius-like surface in Figure~\ref{fig:mobius} serves as a metaphor for the continuous and cyclical nature of task reallocation. Its single, twisted surface symbolizes how the distinction between "human work" and "machine work" is not a fixed divide but a fluid boundary. As technology evolves and human skills adapt, tasks may shift from one domain to the other in a dynamic process of continuous reassessment. The geometry captures the idea that the journey of a task is not necessarily a one-way trip to automation; rather, it exists within a complex, interconnected system where comparative advantage is constantly being renegotiated. This abstract representation reinforces the core theme of our framework: the relationship between human and machine labor is not a simple substitution but a co-evolving partnership.

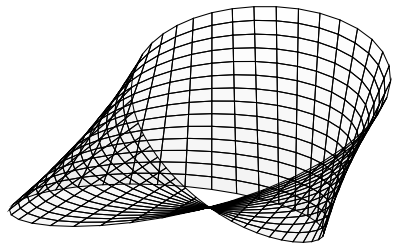
\begin{figure}[H]
 \centering
 \begin{tikzpicture}
  \begin{axis}[
   hide axis,
   view={120}{30},
   axis lines=none,
   colormap/blackwhite,
   samples=40,
   samples y=15,
   domain=0:360,
   y domain=-1:1
  ]
   \addplot3[surf,shader=flat,draw=black,fill opacity=0.05](
    {cos(x)*(1 + 0.5*y*cos(x/2))},
    {sin(x)*(1 + 0.5*y*cos(x/2))},
    {0.5*y*sin(x/2)}
   );
  \end{axis}
 \end{tikzpicture}
 \caption{Abstract twisted surface representing the cyclical delegation of assignments between human and machine across intricacy and time. The Möbius-like twist symbolises that assignments may transition back and forth depending on comparative advantage, reflecting the continuous reassessment in our framework.}
 \label{fig:mobius}
\end{figure}

The insights gleaned from the continuous benchmark underscore a fundamental conclusion: the future division of labor is not a fixed destiny but a dynamic outcome contingent on specific, measurable factors. By modeling tasks along a continuous spectrum and introducing a shifting automation boundary, we move beyond simplistic dichotomies and gain a more nuanced understanding of how technological progress interacts with the existing structure of work. This framework reveals that the ultimate equilibrium—the stable share of tasks performed by humans versus machines—is highly sensitive to both the rate of AI improvement and the inherent complexity profile of the jobs in an economy. Having established this core dynamic, a critical next step is to explore the breadth of possible outcomes. The specific parameters used in our benchmark represent just one plausible scenario. To assess the robustness of our conclusions and map out a wider range of potential futures, we now turn to a systematic parameter grid assessment. By varying the key drivers of the model, we can investigate how different assumptions about technological speed and task distribution shape the long-term landscape of human-machine collaboration.

\FloatBarrier
\section{Parameter grid assessment}\label{sec:grid}

The continuous framework allows us to systematically explore how changes in the
underlying assignment distribution and the rate of AI enhancement influence the
long\hyp run share of automated assignments. We vary the shape parameter $p$ of
the Beta distribution (controlling the skew toward low or high intricacy),
the AI enhancement rate $\gamma$, and the other shape parameter $q$
(controlling the tail heaviness). For each triple $(p,q,\gamma)$ we
compute the final fraction of assignments automated by 2045 and the time (in
calendar years) at which automation first exceeds 50\,\%.

Across the four values of $q$ considered ($q\in\{4,5,6,7\}$), we find a
consistent pattern: higher values of $\gamma$ (faster AI progress)
escalate the final automation share, while higher values of $p$
(distributions weighted toward intricate assignments) diminish it. The parameter
$q$ influences how quickly the boundary enters the bulk of the assignment
distribution; larger $q$ (heavier tail toward easy assignments) yields higher
automation for a given $p$ and $\gamma$. The time to reach 50\,\%
automation exhibits a similar dependence: faster AI enhancement and more
low\hyp intricacy assignments (small $p$, large $q$) bring forward the
half\hyp automation date.

Table~\ref{tab:grid} reports the final fraction of assignments automated (in
percent) for a representative grid with $q=5$ and
$p\in\{1.5,2.0,2.5,3.0,3.5\}$ and $\gamma\in\{0.03,0.04,0.05,0.06,0.07\}$.
Each cell shows the percentage of work done by AI in 2045. As an example,
when the assignment distribution is strongly skewed toward simple assignments ($p=1.5$)
and machine capability enhances at rate $\gamma=0.07$, the framework predicts roughly
85.6\,\% of work is automated. By contrast, if the distribution is more
balanced ($p=3.5$) and machine capability enhances slowly ($\gamma=0.03$), only about
38.7\,\% of assignments are automated. These comparisons illustrate the wide
range of possible steady states and highlight the importance of both human
assignment composition and AI research progress in shaping the future of work.

\begin{table}[H]
 \centering
 \caption{\textbf{Sensitivity of the continuous framework to Beta shape $p$ and AI
 enhancement rate $\gamma$ (with $q=5$).} Entries denote the final
 percentage of assignments automated in 2045. Higher AI enhancement rates
 $\gamma$ and more low\hyp intricacy assignments (smaller $p$) yield greater
 automation.}
 \label{tab:grid}
 \begin{tabular}{@{}lccccc@{}}
  \toprule
  $p\backslash\gamma$ & 0.03 & 0.04 & 0.05 & 0.06 & 0.07\\
  \midrule
  1.5 & 50.5 & 61.9 & 71.6 & 79.4 & 85.6\\
  2.0 & 44.8 & 56.4 & 66.7 & 75.5 & 82.7\\
  2.5 & 41.6 & 53.1 & 63.8 & 73.2 & 81.1\\
  3.0 & 39.8 & 51.3 & 62.2 & 72.1 & 80.4\\
  3.5 & 38.7 & 50.3 & 61.5 & 71.7 & 80.5\\
  \bottomrule
 \end{tabular}
\end{table}

To provide an intuitive visualisation of the cyclical nature of assignment reassignment implied by this grid assessment, Figure~\ref{fig:mobius} depicts a twisted surface defined by a Möbius‑like parametric strip. The coordinates of the strip are given by $(X(s,t),Y(s,t),Z(s,t))$ with $s$ representing angular position along the band and $t$ its transverse coordinate (see the caption for explicit formulas). This single half‑twist illustrates how assignments move around the band as their relative payoffs change: points on one edge of the strip (human‑preferred assignments) can traverse the twist to the other edge (machine‑preferred assignments) as parameters evolve. The geometry therefore captures how assignments may cycle between human and machine responsibility as their comparative advantages shift over time and intricacy, reinforcing the conceptual link between our theoretical findings and the dynamics depicted in Table~\ref{tab:grid}. Because the strip has only one boundary, traversing a full loop brings one back to the starting point but with roles reversed—mirroring the idea that some assignments might oscillate between human and machine supervision before settling in the steady state.

\FloatBarrier
\section{Discussion: the steady state in human terms}\label{sec:discussion}

Having presented the mathematics of the human–machine cooperation steady state, we now translate the findings into intuitive implications.

\subsection{A new profession of “workflow conductor”}

Just as the introduction of computers created the profession of software engineers, the integration of automated systems is giving rise to a new kind of profession—what we call the \emph{augmented worker} or \emph{workflow conductor}. This role entails understanding various machine capabilities and knowing how to deploy them effectively to solve intricate workflows. Our findings show that this role is not a transient phase but rather a stable outcome. The steady state assessment indicates that it is optimal for humans to remain “in the loop” orchestrating these systems for the foreseeable future. Any deviation—either ceding all control to fully autonomous automation or doing everything manually—findings in lower overall performance. Fully autonomous systems, even if technically capable, lack human oversight and could lead to errors or misaligned outcomes (a point emphasised by those who argue fully autonomous agents should be approached cautiously \cite{parasuraman2000}). On the other hand, ignoring automation forfeits massive efficiency gains. Therefore the stable solution is a partnership: the human curates and verifies, the machine produces and executes sub‑assignments. This is reminiscent of what Licklider envisioned in “man–computer symbiosis,” where he described a future intimate cooperation in which humans set goals and formulate hypotheses while computers work on the tedious details of implementation \cite{licklider1960}. Our steady state formalises precisely such a symbiosis.

\subsection{Continuously evolving skill requirements}

One outcome of the convergence process is that the skill set required of human workers evolves. At the start (2025), many humans still perform routine assignments because AI is not universally deployed. Over time those assignments shift to AI, and humans move into roles requiring more intricate skills. By steady state, humans have become highly specialised in assignments that machines are bad at. This implies that education and training should pivot towards those skills: creativity, critical thinking, interpersonal communication, cross‑domain reasoning and oversight of automation. There will be a premium on being able to work with automated tools (the centaur chess analogy again: success comes from effectively leveraging the tools, not from brute force without it). In economic terms, the steady state reflects what Brynjolfsson et~al. found empirically: access to AI increases productivity significantly, especially for less‑experienced workers, by giving them on‑the‑job assistance; however, it changes the distribution of work such that human roles become more focused on assignments that complement automated systems \cite{brynjolfsson2025a}. Our theoretical framework shows that ultimately all workers become, in a sense, “managers” of automated systems to some degree. Even entry‑level jobs may involve orchestrating automated tools rather than doing manual data processing. In the steady state state, the value of human labour is tightly coupled with the ability to supervise and augment machines—those who cannot or will not work with automated tools might struggle to find a place, which raises societal challenges \cite{hazra2025}. The existence of a stable human role in our framework is a positive sign, but realising it in practice will require managing the transition so that humans can upskill into these roles.

\subsection{Benchmarking the cooperation}

Our framework also suggests new ways to benchmark automated systems—not in isolation, but in how well they collaborate with humans. Traditional AI benchmarks ask “can the AI do $X$ as well as a human?” but in a human–machine steady state, the relevant question is “can the AI and human together do $X$ better than either alone?” This calls for what some have termed \emph{centaur evaluation}. The performance at steady state could be seen as an upper bound of such cooperation. For instance, if we look at creative‑writing assignments: a human alone might write with originality but slowly, an AI alone writes fast but might be bland or factually careless, while a human+AI team (with the human guiding and editing the AI’s output) might produce a result faster than the human alone and more nuanced than the AI alone. The steady state theory predicts that the combined system outperforms either stand‑alone system—supported by the chess example and emerging evidence in programming and content creation domains \cite{alves2023,brynjolfsson2025a}. It would be useful to design benchmarks where human subjects are paired with AI tools and measured on joint assignment performance to identify the optimal assignment boundaries (which parts of the assignment the human should do vs.\ the AI to maximise combined output). Over time, as machine capability enhances, these boundaries shift, and our framework provides a way to think about that shift. In steady state, we found a boundary in assignment‑space $\theta^\star$ such that all assignments $\theta<\theta^\star$ (simpler ones) the AI does, and $\theta>\theta^\star$ (harder ones) the human does. Benchmarking could attempt to empirically find this $\theta^\star$ in various domains and track it over time as AI capabilities expand. This would inform workers which skills to focus on (those above the current $\theta^\star$) and inform researchers in automation on which capabilities are still lacking.

\subsection{Robustness and ethical considerations}

One might ask: what if the assumption that humans will retain some assignments breaks down—could AI eventually do everything, meaning $\beta\to 0$ in our framework? While it is not impossible in theory, our assessment coupled with historical precedent (from the Industrial Revolution to the Computer Revolution) suggests there are always unforeseen human comparative advantages. When automation took over physical labour, creative and cognitive jobs expanded; now as cognitive automation advances, we see more emphasis on social and emotional skills. Moreover, human preferences play a role: society may simply choose not to automate certain functions fully. Consider autonomous weapons or fully AI‑run courtrooms—many argue humans should remain in the loop for moral accountability, even if AI could make faster decisions. Thus $\beta$ is not purely technological; it is socio‑technological. As long as humans value human involvement, they will create roles for it. Our steady state formalism can incorporate that by treating some portion of assignments as “for human only” by design (which mathematically would mean those assignments always have $u_H>u_M$ because we impose constraints or ethical considerations on $u_M$). In short, the fixed point we calculated might be an upper bound on automation given not just technical constraints but also normative choices. The stability of the human augmented worker role could therefore be even more assured when factoring in governance: many frameworks like the humans‑in‑command principle in automation ethics push for retaining human final say in critical decisions \cite{parasuraman2000}. All these suggest that the human–machine steady state is not just a theoretical outcome but a desirable target to aim for—avoiding both extremes of no automation and full automation.

Finally, we reflect on the limitations of our framework. It is highly abstracted and does not capture, for example, economic incentives of different stakeholders (workers vs.\ employers might have different preferences for automation) or the possibility of uneven distribution of this steady state (some professions might reach a stable human–machine mix, others might be almost fully automated). In reality, there may be a diversity of fixed points across sectors. Our assessment is most relevant to knowledge work and domains where humans and AI can flexibly reallocate assignments. Still, by focusing on a single worker and their toolkit of automated systems, we have outlined a microcosm that could apply broadly. Future work could extend the framework to multiple workers with different skill levels, exploring steady states in team settings or entire labour markets. There may also be transitional challenges—for example, before reaching steady state, some workers could be displaced if they cannot upskill quickly enough, even if in the long run new jobs (human assignments) are created. These dynamic labour‑market frictions are beyond our current scope but crucial for policy. In essence, our fixed‑point assessment is an optimistic long‑run view that assumes adaptability; ensuring that this long run is achieved smoothly is a assignment for economists and policymakers.

\FloatBarrier
\section{Conclusion\label{sec:conclusion}}

\begin{figure}[H]
 \centering
 \begin{tikzpicture}
  \begin{axis}[
   width=0.8\textwidth,
   view={45}{30},
   xlabel={Time $t$},
   ylabel={Task intricacy $\theta$},
   zlabel={Utility difference $u_M-u_H$},
   colormap/viridis,
   colorbar,
   samples=25,
   domain=0:1,
   y domain=0:1,
   mesh/ordering=y varies,
   shader=flat
  ]
   \addplot3[surf] {x - y};
  \end{axis}
 \end{tikzpicture}
 \caption{Mathematical landscape of the human–machine advantage $u_M(t,\theta)-u_H(\theta)$ viewed as a surface over time $t$ and assignment intricacy $\theta$. Colours indicate regions where machines outperform humans (positive values) versus areas where human comparative advantage persists. The planar structure symbolises the linear interplay between enhancement over time and increasing assignment intricacy.}
 \label{fig:landscape}
\end{figure}
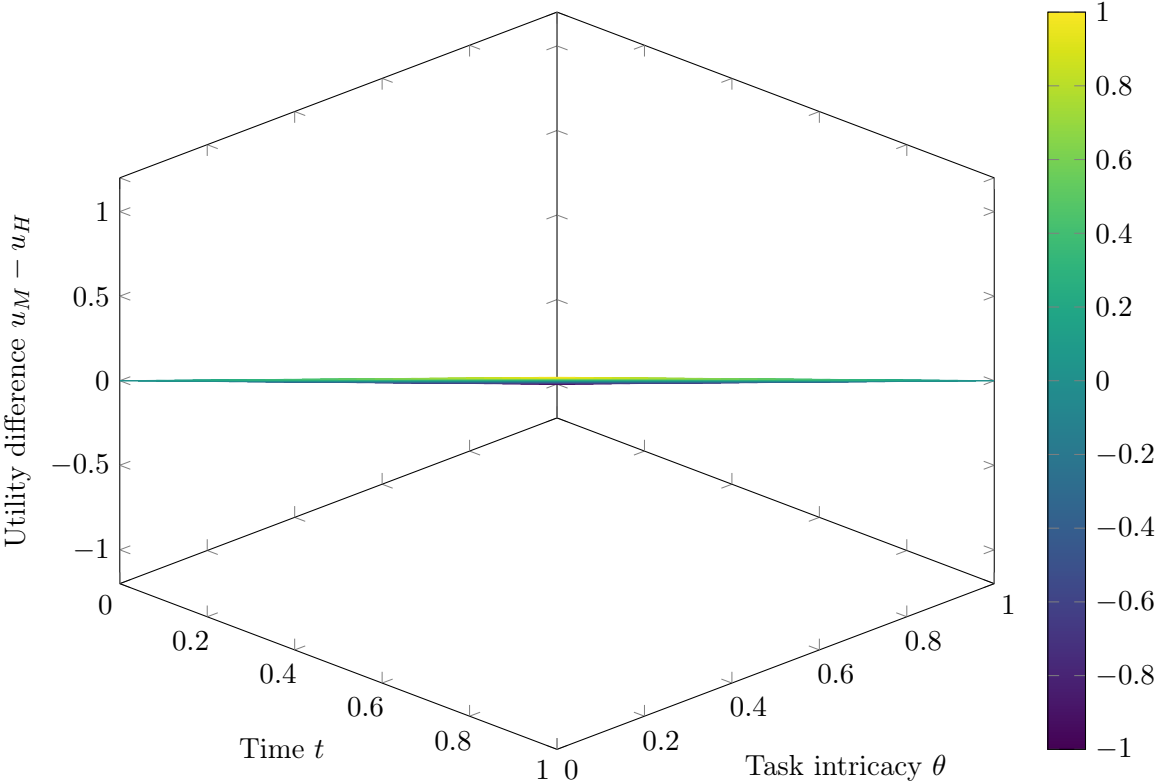

We introduced a mathematical framework to understand the future of work in the age of AI by casting the problem as one of finding a fixed point in human–machine assignment allocation. The human augmented worker—a human worker leveraging multiple automated systems—emerges as a central figure in this assessment, representing the stable coexistence of human and machine intelligence in workflows. Through our framework, we proved that under broad conditions an steady state division of labour not only exists but retains a significant role for humans. Rather than a futurist scenario of human obsolescence, our findings align with the hypothesis that humans will continue to be essential by focusing on assignments that AI is inherently ill‑suited for, even as automated tools handle an increasing share of routine and analytical assignments. We demonstrated this with a simple emulation, reaching an steady state where roughly one‑third of assignments remain with humans. This quantification, though stylised, provides a counterpoint to doomsday projections: it suggests a future where human work is more specialised, possibly more interesting and intertwined with automated support.

Mathematically, our work bridges fixed‑point theory and labour economics in a distinctive way. We utilised Tarski’s theorem to establish the existence of steady states and drew on insights from dynamical systems (contraction mappings) to argue for uniqueness and stability. We believe this formal perspective opens up new avenues for analysing human–machine systems. For example, one could extend the framework to consider multiple fixed points and their basins of attraction, which could correspond to either positive or negative outcomes (for instance, one steady state could be a highly deskilled workforce relying on AI for everything except a few trivial oversight assignments, while another steady state could be a highly skilled workforce using AI for narrow support). Investigating what conditions lead to a “good” steady state versus a “bad” one would be valuable. Additionally, one could introduce economic variables—wages, cost of AI, investment in training—to see how they affect the steady state. These factors could shift the balance $\alpha$ versus $\beta$; for example, if AI becomes very cheap, $\alpha$ might escalate; if society invests heavily in human education, $\beta$ might escalate as humans constantly push the frontier of new assignments. Our framework is flexible enough to incorporate such parameters conceptually.

From a practical standpoint, our findings reinforce calls for policies that facilitate the human–machine partnership framework. This could include promoting \emph{Augmented Intelligence} over pure \emph{Artificial Intelligence}, i.e.\ designing automated tools specifically to enhance human workers rather than replace them \cite{zahidi2020}. It could also involve rethinking training and job design: if the steady state job is “workflow conductor,” we should train people in prompt engineering, result verification and multi‑agent management. Businesses might start valuing skills like “knowing which tool to deploy for which problem,” much as in the past decade data literacy and computer literacy became important. Those organisations that figure out the optimal human–machine workflow early will have a competitive advantage. Early case studies (such as deployment of automated tools in customer support) indicate substantial productivity boosts when humans work alongside automated systems \cite{brynjolfsson2025a}. However, to avoid the “Turing trap” of blindly aiming to replace humans, stakeholders must remain focused on the end goal of augmenting human productivity and satisfaction \cite{brynjolfsson2022}. In conclusion, our work presents a hopeful yet realistic vision: a future where humans and machines achieve a stable, collaborative steady state, combining their strengths. The human augmented worker, far from being a fanciful notion, appears to be a mathematically grounded outcome of iterative improvements. It is a role that could become widespread—people who are essentially “workflow integrators” in various fields, ensuring that automated technologies are applied in the right ways to solve problems while injecting human insight and values. Such a symbiosis might well define the next era of work.

\FloatBarrier

\end{document}